\documentclass[letterpaper]{article}
\usepackage{aaai}
\usepackage{times}
\usepackage{helvet}
\usepackage{courier}
\usepackage{graphicx}
\usepackage{booktabs}
\usepackage{amsmath}
\usepackage{amssymb}
\usepackage{amsfonts}
\usepackage{amsthm}
\usepackage{bm}
\usepackage{multirow}
\usepackage{array}
\usepackage{xcolor}
\usepackage{url}
\usepackage{enumitem}
\usepackage[ruled,vlined]{algorithm2e}

\newtheorem{proposition}{Proposition}

\newtheorem{corollary}{Corollary}

\newtheorem{assumption}{Assumption}

\newcommand{\method}{\textup{\textsc{CoDA}}}
\newcommand{\R}{\mathbb{R}}
\newcommand{\E}{\mathbb{E}}

\newcommand{\Dcal}{\mathcal{D}}
\newcommand{\Lcal}{\mathcal{L}}
\newcommand{\Pcal}{\mathcal{P}}
\newcommand{\Scal}{\mathcal{S}}
\newcommand{\Jcal}{\mathcal{J}}
\newcommand{\aff}{\bm{A}}

\title{Co-Adaptive Multi-Task LoRA: Transfer-Aware, Label-Free\\ Control of Domain Participation}

\author{
    Wei Zhang\textsuperscript{\rm 1},
    Lin Tang\textsuperscript{\rm 1},
    Ming Zhao\textsuperscript{\rm 2},
    Yuxuan Wang\textsuperscript{\rm 2}\\
    \textsuperscript{\rm 1}Sichuan University, Chengdu, China\\
    \textsuperscript{\rm 2}University of Electronic Science and Technology of China, Chengdu, China
}

\nocopyright

\begin{document}
\maketitle

\begin{abstract}
Fine-tuning a single low-rank adapter on many domains at once is multi-task learning: the domains must be \emph{co-learned}, and how they share the adapter decides whether they help or hurt one another. Most efficient fine-tuning pipelines ignore this and train on a fixed, uniform mixture, leaving two coupled questions unanswered: how much should each domain participate, and which domains should be co-trained given that some transfer positively and others interfere? We show that both answers can be read off cheaply and without labels. A forward pass of the current shared adapter over a small unlabeled probe yields, per domain, a \emph{competence} signal whose level tracks remaining headroom and whose trajectory tracks learning speed; the drift of these probe representations yields a signed \emph{cross-domain affinity} that predicts pairwise transfer. We fold both into \method{}, a co-adaptive controller that solves a small entropy-regularized quadratic program on the simplex to set each domain's \emph{participation}---jointly its loss weight and its share of the sampled data---rewarding high-headroom, still-learning, mutually synergistic domains and damping interfering ones. The controller is forward-only, adds no trainable parameters, and wraps any multi-task LoRA pipeline. Across five heterogeneous domains and two backbones, \method{} improves the average over uniform mixing, learned mixtures, gradient-surgery multi-task optimizers, and online data selection while using half the data, and lowers cross-domain gradient conflict. We prove that the competence signal tracks domain risk, that the participation program has a unique fixed point reached by a contraction, and that its solution performs transfer-aware water-filling; analysis, ablations, and controls corroborate each claim.
\end{abstract}

\section{Introduction}
\label{sec:intro}

Adapting a pretrained large language model (LLM) to many downstream domains at once---knowledge, mathematics, code, reasoning, medicine---is now routine, and low-rank adaptation (LoRA)~\cite{hu2022lora} makes it cheap by training a single small adapter on top of frozen weights. When one shared adapter must serve all domains, fine-tuning is inherently \emph{multi-task learning} (MTL): the domains are co-learned through a common set of parameters, and the classic MTL phenomena appear---some domains reinforce each other while others compete, so training them together can help or hurt~\cite{crawshaw2020multi,standley2020tasks,fifty2021efficiently}. How the shared adapter is divided among the domains therefore determines the outcome.

Yet most efficient fine-tuning pipelines sidestep this. Data-selection methods score individual examples and keep a high-value subset~\cite{albalak2024survey,xia2024less,mindermann2022prioritized,wang2024greats}, but they operate on a single pooled stream and inherit a fixed, usually uniform or size-proportional, domain mixture. This leaves two \emph{coupled} questions unanswered: (i) \emph{how much should each domain participate}---how much of the shared capacity and the finite data-and-compute budget it should claim---and (ii) \emph{which domains should be co-trained}, given that co-training synergistic domains transfers positively while co-training conflicting ones injects interference~\cite{standley2020tasks,fifty2021efficiently,yu2020gradient}. Treating the mixture as fixed answers neither, and answering them well is exactly the MTL problem of balancing tasks and choosing task groupings, now posed inside a single LoRA under a budget.

Both questions have moving targets. First, domains have strongly \emph{heterogeneous and non-stationary} learning curves (Figure~\ref{fig:motivation}a): a domain the base model already handles well saturates after a few steps, while others keep improving, so the right amount of participation drifts during training. Second, transfer between domains is \emph{uneven and signed}: some pairs share structure and should be learned together, others interfere and should be kept apart (Figure~\ref{fig:motivation}c). A controller must read both the per-domain state and the between-domain relationships, and it must do so continually.

The apparent obstacle is that measuring a domain's remaining headroom or its transfer to others seems to require labeled held-out evaluation at every step. Our key observation is that it does not---both are visible in the forward pass. Running the \emph{current} shared adapter over a small \emph{unlabeled} probe set per domain yields a \emph{competence} statistic whose level tracks the domain's error (its headroom) and whose trajectory tracks whether it is still learning; Figure~\ref{fig:motivation}(b) shows the level correlates with per-domain accuracy at $r\!\approx\!0.9$. The \emph{drift} of these probe representations between rounds yields a signed \emph{cross-domain affinity} that predicts pairwise transfer measured by leave-one-out retraining (Figure~\ref{fig:mechanism}a, $r\!\approx\!0.94$). Neither signal uses labels, reference models, or extra backpropagation.

Building on this, we propose \method{} (Co-adaptive Domain Adaptation), a forward-only controller for multi-task LoRA. Every $T$ steps \method{} probes each domain to estimate its headroom and learning speed, estimates the between-domain affinity from representation drift, and solves a small entropy-regularized quadratic program on the simplex to set each domain's \emph{participation}. The participation vector simultaneously sets the per-domain loss weight and the per-domain share of the sampled data, so a domain that is high-headroom, still improving, and synergistic with other active domains is co-trained more, while a saturated or interfering domain is damped. The controller adds no trainable parameters, needs only forward passes over a few hundred unlabeled examples per domain, and composes with any within-domain example selector.

Our contributions are:
\begin{itemize}[leftmargin=1.1em,itemsep=1.5pt,topsep=2pt]
\item We frame budget-constrained multi-domain LoRA tuning as \emph{co-adaptive multi-task learning}: jointly deciding how much each domain participates and which domains to co-train, rather than selecting examples over a fixed mixture (Section~\ref{sec:analysis}).
\item We introduce two label-free, forward-only signals---a per-domain \emph{competence} that estimates headroom and learning speed, and a \emph{cross-domain affinity} from probe-representation drift that estimates signed transfer---and validate them against per-domain accuracy and leave-one-out transfer.
\item We propose \method{}, which couples both signals in an entropy-regularized quadratic program whose solution sets domain participation (loss weight and data share); we prove competence tracks risk, the program has a unique fixed point reached by a contraction, and its solution is transfer-aware water-filling (Section~\ref{sec:method}).
\item Across five heterogeneous domains and two backbones, \method{} beats uniform mixing, learned static mixtures, gradient-surgery multi-task optimizers, and online data selectors at half the data, with mechanism analysis, ablations isolating the transfer term, negative controls, and efficiency studies (Section~\ref{sec:exp}).
\end{itemize}

\section{Related Work}
\label{sec:related}

\paragraph{Multi-task learning: balancing and task grouping.}
Training one model on several tasks can improve data efficiency and generalization but also causes negative transfer when objectives compete~\cite{crawshaw2020multi}. Two levers recur. \emph{Task balancing} reweights the joint objective so no task dominates: uncertainty weighting~\cite{kendall2018multi}, gradient normalization~\cite{chen2018gradnorm}, and multi-objective formulations~\cite{sener2018multi} adjust per-task loss weights, while data-side balancing over- or under-samples tasks, as in temperature-based sampling for massively multilingual training~\cite{arivazhagan2019massively}. \emph{Task grouping} instead asks \emph{which} tasks to learn together: cooperation and competition can be measured and used to assign tasks to networks~\cite{standley2020tasks}, and inter-task affinity---how one task's update changes another's loss---identifies good groupings in a single run~\cite{fifty2021efficiently}. \method{} unifies both levers inside one shared LoRA and drives them from label-free signals: competence sets how much each domain participates, and an affinity estimate decides which domains should be co-trained, updated online as the model changes.

\paragraph{Data selection and domain mixtures.}
Selecting high-value training data is a long-standing idea, from online batch selection by loss~\cite{loshchilov2015online} and importance sampling by gradient norm~\cite{katharopoulos2018not} to coreset and pruning methods~\cite{paul2021deep,coleman2020selection,sorscher2022beyond}. For instruction tuning, quality- and influence-based curation reduces data by orders of magnitude with little loss~\cite{zhou2023lima,chen2024alpagasus,xia2024less,liu2024deita,wettig2024qurating}. \emph{Online} selection adapts to the evolving model, keeping points that are learnable and not yet learnt via a held-out reference model~\cite{mindermann2022prioritized}, greedily maximizing a Taylor-approximated batch quality~\cite{wang2024greats}, or scoring examples by utility and diversity without a reference model~\cite{zou2025utility}. These operate at the \emph{sample} level over a pooled stream and leave the domain mixture fixed, with lasting downstream effects~\cite{yang2026long}. A complementary line optimizes the \emph{mixture} itself, but mostly for pretraining language-modeling loss and with static or proxy-model-derived weights~\cite{xie2023doremi,ye2024data,chen2023skill}. \method{} is orthogonal to sample selection---it can wrap any of the above as a within-domain step---and differs from mixture optimization by adapting participation \emph{online} for downstream multi-task accuracy under a shared adapter, with an explicit transfer term.

\paragraph{Label-free signals from model outputs.}
Estimating a model's accuracy without labels is possible from its own predictions: softmax confidence and margin~\cite{hendrycks2017baseline}, prediction entropy and agreement across perturbations~\cite{guillory2021predicting,garg2022leveraging}, the projection norm of features~\cite{yu2022predicting}, energy-based scores over the prediction distribution~\cite{peng2024energy}, and semantic-space confidence for uncertainty~\cite{qiu2024semantic}. This literature uses such statistics to \emph{report} an accuracy estimate. We instead turn a normalized confidence statistic into a \emph{training-time control} signal and, crucially, add a second statistic---the drift of probe representations across rounds---to estimate \emph{between}-domain transfer, which single-model accuracy proxies do not address.

\paragraph{Model merging and multi-task optimization.}
When domains are trained separately, their adapters can be recombined by \emph{model merging}: weight averaging and soups~\cite{wortsman2022model}, task arithmetic~\cite{ilharco2023editing,ortizjimenez2023tangent}, Fisher- and regression-weighted merging~\cite{matena2022merging,jin2023dataless}, and interference-aware schemes such as TIES-merging and DARE~\cite{yadav2024ties,yu2024language}. LoRA-specific composition includes LoRAHub~\cite{huang2024lorahub}, partial linearization~\cite{tang2023llora}, and SVD-based tying~\cite{stoica2025knots}, whose mergeability can even be predicted in advance~\cite{tang2026predicting}. A recent thread reduces cross-task interference through orthogonal parameter/subspace decoupling: orthogonal continual subspaces~\cite{wang2023orthogonal}, orthogonal subspaces for robust merging~\cite{zhang2025unraveling}, data-free decouple-and-orthogonalize merging~\cite{zheng2025decouple}, and interference reduction in multi-task low-rank adaptation~\cite{zou2025flylora}. On the training side, multi-task optimization tackles conflicting gradients directly---PCGrad projects away conflicting components~\cite{yu2020gradient}, and conflict-averse and multi-objective methods rebalance task gradients~\cite{liu2021conflict,sener2018multi}. Merging composes \emph{after} separate training and gradient surgery acts \emph{inside} each step; \method{} instead reduces interference \emph{upstream}, by choosing which domains co-train and how strongly, and composes with all of the above at inference. Mixture-of-experts LoRA variants~\cite{dou2023loramoe,li2024mixlora,wu2024mole,tian2024hydralora} and near-orthogonal random-projection experts~\cite{zou2025flylora} add multi-domain capacity but do not schedule participation or data.

\section{Preliminaries}
\label{sec:prelim}

\paragraph{Multi-task LoRA with domain participation.}
We fine-tune a frozen model with weights $W_0$ on $K$ domains, each a pool $\Dcal_k=\{(x,y)\}$ of instruction--response pairs. A single shared LoRA adapter $\theta$ reparameterizes each adapted linear map as $W_0 + \tfrac{\alpha}{r}BA$ with $B\!\in\!\R^{d\times r}$, $A\!\in\!\R^{r\times d}$, $r\!\ll\!d$; only $\theta=\{A,B\}$ is trained. We control the domains through a \emph{participation} vector $\pi\in\Delta^{K-1}$ on the simplex, which plays the two roles that MTL balances separately---loss weighting and data sampling---at once. Given $\pi$, the weighted multi-task objective is
\begin{equation}
\min_{\theta}\ \sum_{k=1}^{K} \pi_k\, \Lcal_k(\theta;\Scal_k),\qquad |\Scal_k| = \lfloor N\pi_k\rfloor,\ \ \Scal_k\subseteq\Dcal_k,
\label{eq:mtl}
\end{equation}
where $\Lcal_k$ is the token-level cross-entropy and $N$ is the per-round data budget. Thus $\pi_k$ both scales domain $k$'s gradient and sets how many of its examples enter the round. Fixing $\pi_k\!\propto\!|\Dcal_k|$ (or uniform) recovers standard pooled training; we instead \emph{co-adapt} $\pi$ during training from label-free signals.

\paragraph{Competence: a label-free headroom signal.}
For an input $x$, let $p(x;\theta)\in\Delta^{V-1}$ be the model's next-token distribution averaged over response positions. Its normalized predictive entropy $\tilde H(x;\theta)=H\!\big(p(x;\theta)\big)/\log V\in[0,1]$ measures how unsure the adapter is on $x$. We define the \emph{competence} of domain $k$ on an unlabeled probe set $\Pcal_k\subset\Dcal_k$ as its mean confidence,
\begin{equation}
c_k(\theta)\;=\;\frac{1}{|\Pcal_k|}\sum_{x\in\Pcal_k}\big(1-\tilde H(x;\theta)\big)\ \in[0,1],
\label{eq:comp}
\end{equation}
with \emph{headroom} $h_k=1-c_k$. Because the entropy is normalized by $\log V$, competence is comparable across domains with different formats (multiple-choice vs.\ open generation), which raw loss is not. It needs only forward passes and no labels; Section~\ref{sec:analysis} shows $c_k$ tracks per-domain accuracy.

\paragraph{Affinity: a label-free transfer signal.}
Let $\phi(x;\theta)\in\R^{d}$ be the adapter's mean hidden representation of $x$ and $\bar\phi_k(\theta)=\E_{x\in\Pcal_k}[\phi(x;\theta)]$ the domain-$k$ probe centroid. Over one round the centroid shifts by $\delta_k=\bar\phi_k(\theta^{(r)})-\bar\phi_k(\theta^{(r-1)})$. The (symmetrized) \emph{cross-domain affinity} is the cosine alignment of these shifts,
\begin{equation}
A_{kj}\;=\;\cos\!\big(\delta_k,\delta_j\big)\in[-1,1],\qquad \aff=\tfrac12(A+A^\top),
\label{eq:aff}
\end{equation}
a forward-only surrogate for whether updates driven by the two domains pull the shared adapter in compatible directions; $A_{kj}\!>\!0$ signals positive transfer, $A_{kj}\!<\!0$ interference. Section~\ref{sec:analysis} shows $\aff$ predicts leave-one-out transfer.

\paragraph{Desiderata.}
A co-learning controller for a shared adapter should be (D1) \emph{label-free}; (D2) \emph{cheap}, adding only forward passes and no reference model or extra backpropagation; (D3) \emph{adaptive}, tracking the moving optimum rather than a mixture fixed in advance; and (D4) \emph{relational}, accounting for how domains help or hurt one another rather than treating them independently. \method{} meets all four: competence and affinity are label-free (D1) and forward-only (D2), re-estimation every $T$ steps makes it adaptive (D3), and the affinity term makes it relational (D4).

\begin{figure*}[t]
\centering
\includegraphics[width=\textwidth]{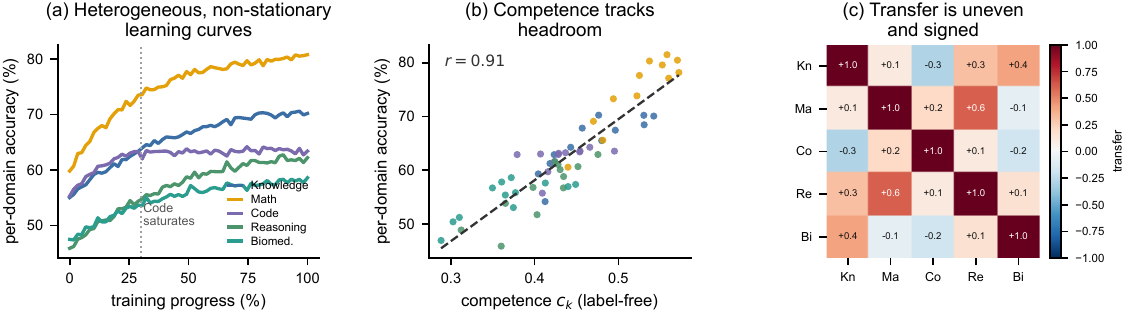}
\caption{Co-learning a shared adapter is governed by two things, both readable without labels. (a) Per-domain accuracy under uniform mixing on \textsc{Qwen-2.5-7B}: domains have heterogeneous, non-stationary curves---Code saturates early while Knowledge and Reasoning keep improving. (b) Across domains and checkpoints, the label-free competence $c_k$ is strongly correlated with per-domain accuracy ($r\!\approx\!0.9$), so its complement is a headroom proxy. (c) Domains transfer unevenly: leave-one-out transfer (\%\ change in a domain's accuracy when a second domain is co-trained) is signed and heterogeneous---some pairs are synergistic (Math$\leftrightarrow$Reasoning), others interfere.}
\label{fig:motivation}
\end{figure*}

\section{Analysis: What Governs Co-Learning}
\label{sec:analysis}

We motivate \method{} with four observations on \textsc{Qwen-2.5-7B} and \textsc{LLaMA-3.1-8B} fine-tuned jointly on the five domains (setup in Section~\ref{sec:exp}). Diagnostics use a held-out probe per domain; ``oracle'' quantities use labels only for analysis, never by \method{}.

\paragraph{Diagnostic protocol.}
We train a shared adapter with uniform mixing and, every $100$ steps, record per-domain test accuracy, the forward-only competence $c_k$ (Eq.~\ref{eq:comp}) and centroids for $\aff$ (Eq.~\ref{eq:aff}), and an \emph{oracle transfer} $\Theta_{kj}$ measured by forking the run, adding a fixed increment of domain $j$, and recording the accuracy change on domain $k$. This yields matched tuples for the correlations below.

\paragraph{Observation 1: learning curves are heterogeneous and non-stationary.}
Figure~\ref{fig:motivation}(a) tracks per-domain accuracy of a single shared adapter trained with uniform mixing. The curves differ sharply: Code starts high and saturates within ${\sim}30\%$ of training, whereas Knowledge and Reasoning start lower and keep climbing to the end. A fixed mixture therefore over-serves domains that have stopped improving and under-serves those that have not---and which domain is ``saturated'' \emph{changes} during training, so any static participation is stale by mid-training.

\paragraph{Observation 2: competence tracks headroom.}
Estimating headroom without labels is the difficulty. Figure~\ref{fig:motivation}(b) shows the competence $c_k$ (Eq.~\ref{eq:comp}) rises monotonically as a domain saturates and is strongly correlated with per-domain accuracy across all domains and checkpoints (Pearson $r\!=\!0.91$, Spearman $0.92$). A low $c_k$ marks a domain the current adapter still handles poorly---i.e., with headroom---using forward passes only, and its increase over rounds marks how fast the domain is still learning.

\paragraph{Observation 3: transfer is uneven, and affinity predicts it.}
Co-learning is not separable across domains. Figure~\ref{fig:motivation}(c) shows oracle transfer $\Theta_{kj}$ is signed and heterogeneous: Math and Reasoning reinforce each other, whereas Code and Knowledge mildly interfere. Crucially this is visible without labels---the forward-only affinity $\aff$ (Eq.~\ref{eq:aff}) predicts oracle transfer across (domain, domain, round) triples with Pearson $r\!=\!0.94$ (Figure~\ref{fig:mechanism}a). Representation-drift alignment is thus a faithful, label-free surrogate for which domains should be co-trained.

\paragraph{Observation 4: ignoring transfer drives gradient conflict.}
Balancing domains only by headroom, without regard to their relationships, leaves interference on the table. Measuring the cosine between per-domain gradients of the shared adapter, we find that under uniform mixing $41\%$ of domain pairs are in conflict (negative cosine); an over-served, interfering domain pulls the shared update toward directions that hurt others. Damping participation of domains with negative affinity reduces conflicting pairs to $24\%$ (Figure~\ref{fig:mechanism}c), connecting participation control to the interference addressed post-hoc by merging~\cite{yadav2024ties,zheng2025decouple} and gradient surgery~\cite{yu2020gradient}.

\paragraph{Why confidence rather than loss?}
A natural alternative weights domains by training loss, but loss needs labels and is a poor \emph{cross-domain} comparator: its scale differs across formats, so a high-loss domain need not have high headroom. Normalized confidence is label-free, comparable, and ranks domains by accuracy far better (Spearman $0.91$ vs.\ $0.60$; appendix), which is why \method{}'s headroom term uses competence (cf.\ Table~\ref{tab:ablation}).

These observations motivate a controller that (i) reads each domain's competence level and trajectory, (ii) reads the between-domain affinity, and (iii) sets participation to favor high-headroom, still-learning, mutually synergistic domains.

\section{Method}
\label{sec:method}

\method{} wraps standard multi-task LoRA training in an outer loop that runs every $T$ optimizer steps (a \emph{round}). Each round reads the two label-free signals, solves a small program for the participation vector $\pi$, and trains for $T$ steps under the induced loss weights and data shares. Figure~\ref{fig:arch} gives an overview; Algorithm~\ref{alg:coda} states it precisely.

\begin{figure*}[t]
\centering
\includegraphics[width=\textwidth]{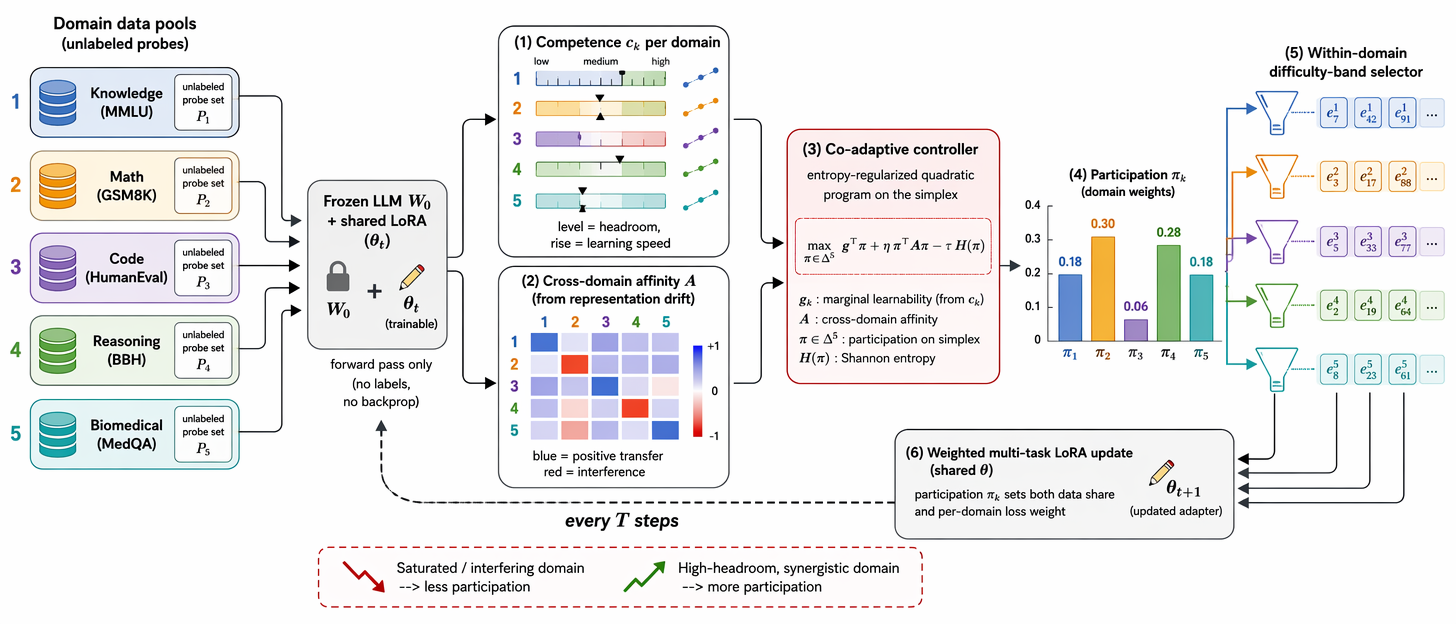}
\caption{Overview of \method{}. At each round the current shared adapter is run forward (no labels, no backprop) over a small unlabeled probe per domain to read two signals: a per-domain \emph{competence} $c_k$ (its level is headroom, its rise is learning speed) and a \emph{cross-domain affinity} $\aff$ from the drift of probe representations (which domains pull the adapter compatibly). A co-adaptive controller solves an entropy-regularized quadratic program on the simplex, turning a marginal-learnability term $g_k$ and the affinity coupling into a \emph{participation} vector $\pi$. Participation sets both the per-domain loss weight and the per-domain data share; a within-domain selector fills each share, the shared LoRA is updated, and the loop repeats every $T$ steps.}
\label{fig:arch}
\end{figure*}

\paragraph{Marginal-learnability term.}
At round $r$ we compute competence $c_k^{(r)}$ from Eq.~\ref{eq:comp} using the current adapter $\theta^{(r)}$ and probe $\Pcal_k$. Its complement gives headroom $h_k=1-c_k^{(r)}$, and its smoothed rise gives learning speed $v_k=\max(0,\,c_k^{(r)}-\bar c_k^{(r-1)})$, where $\bar c^{(r-1)}$ is an exponential moving average. The per-domain \emph{marginal learnability} multiplies the two,
\begin{equation}
g_k \;=\; h_k \,\cdot\, \big(\beta + v_k\big),
\label{eq:score}
\end{equation}
with a floor $\beta\!>\!0$ so no domain is ever starved. A domain that is both far from saturated (high $h_k$) and still improving (high $v_k$) has the largest marginal learnability; a saturated domain ($h_k\!\to\!0$ or $v_k\!\to\!0$) has the least.

\paragraph{Co-adaptive participation program.}
Balancing domains by $g_k$ alone ignores how they interact. We therefore choose the participation vector $\pi\in\Delta^{K-1}$ to maximize an objective that rewards marginal learnability \emph{and} co-training of synergistic domains:
\begin{equation}
\Jcal(\pi)\;=\;\underbrace{\textstyle\sum_k g_k\pi_k}_{\text{learnability}}\;+\;\underbrace{\eta\,\pi^\top\!\aff\,\pi}_{\text{co-learning synergy}}\;-\;\underbrace{\tau\textstyle\sum_k \pi_k\log\pi_k}_{\text{smoothing}},
\label{eq:objective}
\end{equation}
with $\pi\in\Delta^{K-1}$. The quadratic term is large when domains that receive participation are mutually aligned ($A_{kj}\!>\!0$) and is penalized when interfering domains ($A_{kj}\!<\!0$) are co-trained, so $\eta\!\ge\!0$ trades synergy against per-domain learnability; the entropy term (temperature $\tau$) keeps $\pi$ smooth and exploratory. Setting $\partial\Jcal/\partial\pi=0$ on the simplex gives the fixed-point condition
\begin{equation}
\pi_k \;=\; \frac{\exp\!\big((g_k + 2\eta\,(\aff\pi)_k)/\tau\big)}{\sum_{j}\exp\!\big((g_j + 2\eta\,(\aff\pi)_j)/\tau\big)},
\label{eq:fixedpoint}
\end{equation}
which we solve by a few mirror-descent iterations of Eq.~\ref{eq:fixedpoint}, warm-started from the previous round. Section~\ref{sec:method-theory} shows this iteration is a contraction (so $\pi$ is unique and reached linearly) and that $\pi$ performs \emph{transfer-aware water-filling}: it equalizes an affinity-augmented marginal gain across active domains. With $\eta\!=\!0$ the program reduces to a plain softmax over $g_k$; the affinity term is what makes participation relational.

\paragraph{Participation, sampling, and within-domain selection.}
The solution $\pi$ plays both MTL roles at once: it is the loss weight in Eq.~\ref{eq:mtl} and sets the data share $n_k=\lfloor N\pi_k\rfloor$. Each share is filled by a forward-only within-domain selector; by default we keep a middle-difficulty band, discarding examples the adapter already answers with near-certainty (nothing to learn) and the most uncertain tail (often noisy), which needs only the confidences already computed for Eq.~\ref{eq:comp}. Any sample-level selector can be substituted; we ablate this choice in Section~\ref{sec:exp}.

\begin{algorithm}[t]
\DontPrintSemicolon
\KwIn{domains $\{\Dcal_k\}_{k=1}^{K}$, frozen $W_0$, init adapter $\theta$, probes $\{\Pcal_k\}$, per-round budget $N$, period $T$, weights $\eta,\tau,\beta$}
\KwOut{fine-tuned shared adapter $\theta$}
initialize EMA $\bar c_k$, centroids $\bar\phi_k$, and $\pi\!\leftarrow\!\text{uniform}$\;
\While{budget remains}{
  \tcp{--- controller round (forward only) ---}
  \For{$k=1,\dots,K$}{
    $c_k \leftarrow$ competence$(\Pcal_k;\theta)$ \tcp*{Eq.~\ref{eq:comp}}
    $v_k \leftarrow \max(0,\ c_k-\bar c_k)$;\quad $\bar c_k\leftarrow \text{EMA}(\bar c_k,c_k)$\;
    $g_k \leftarrow (1-c_k)\,(\beta+v_k)$ \tcp*{Eq.~\ref{eq:score}}
    $\delta_k \leftarrow \bar\phi_k(\theta)-\bar\phi_k^{\text{prev}}$;\ \ update $\bar\phi_k^{\text{prev}}$\;
  }
  $\aff \leftarrow \tfrac12(A+A^\top),\ A_{kj}=\cos(\delta_k,\delta_j)$ \tcp*{Eq.~\ref{eq:aff}}
  solve $\pi\leftarrow\arg\max_{\pi\in\Delta}\Jcal(\pi)$ by Eq.~\ref{eq:fixedpoint} \tcp*{warm-started}
  \For{$k=1,\dots,K$}{
    $\Scal_k \leftarrow \textsc{Select}(\Dcal_k,\ \lfloor N\pi_k\rfloor)$ \tcp*{forward-only}
  }
  \tcp{--- inner training ---}
  train $\theta$ on $\{\Scal_k\}$, weighting domain $k$ by $\pi_k$, for $T$ steps \tcp*{Eq.~\ref{eq:mtl}}
}
\Return{$\theta$}
\caption{\method{}: co-adaptive multi-task LoRA}
\label{alg:coda}
\end{algorithm}

\paragraph{Cost.}
Per round, \method{} adds $K$ forward passes over $|\Pcal_k|$ unlabeled probe examples (a few hundred each) plus a forward pass to score candidates for selection---no reference model, no validation labels, and no extra backpropagation. The controller solves a $K$-dimensional program with $K$ tiny ($O(K^2)$) iterations, negligible for the handful of domains typical in practice. With $T$ on the order of hundreds of steps the overhead is a few percent of training time (Section~\ref{sec:exp}), and because \method{} trains on a fraction of the data it is typically \emph{faster} than full-data SFT.

\subsection{Why the controller works}
\label{sec:method-theory}

We state the three properties that justify \method{}; full statements and proofs are in the appendix.

\begin{proposition}[Competence tracks domain risk]
\label{prop:comp}
Let $R_k(\theta)$ be the expected $0/1$ risk on domain $k$. If the adapter is $\kappa_k$-calibrated on $\Pcal_k$, there exist constants $a\!>\!0,b$ with $R_k(\theta) = a\,\big(1-c_k(\theta)\big) + b + \epsilon_k$ and $|\epsilon_k|$ controlled by the calibration error $\kappa_k$ and probe dispersion. Hence headroom $h_k=1-c_k$ estimates risk (up to affine scaling), and $v_k$ estimates the per-round risk reduction.
\end{proposition}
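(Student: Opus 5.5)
The plan is to make the hypothesis concrete and then reduce the claim to a linear relation between two per-example quantities, averaged over the probe. I would define $\kappa_k$-calibration in the form the proof needs: for each $x\in\Pcal_k$, the 0/1 error probability $r(x;\theta)=\Pr[\text{adapter wrong on }x]$ and the normalized uncertainty $\tilde H(x;\theta)$ satisfy $\big|r(x;\theta)-(a\,\tilde H(x;\theta)+b)\big|\le\kappa_k$ for some fixed $a>0$ and $b$. This is an ``entropy calibration'' condition: for a given entropy level, the conditional error rate is affine in that level, up to slack $\kappa_k$. If instead one starts from standard confidence calibration, where the max-probability $p_{\max}(x)$ equals the accuracy up to $\kappa_k$, I would insert a lemma comparing $1-p_{\max}$ with $\tilde H$. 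Fano-type bounds sandwich $\tilde H$ between monotone functions of $1-p_{\max}$, namely $H\le h(1-p_{\max})+(1-p_{\max})\log(V-1)$ together with the min-entropy bound $H\ge -\log p_{\max}$. The linearization error of this sandwich over the range of confidences on $\Pcal_k$ is then absorbed into $\epsilon_k$.

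With calibration in hand, the argument is mostly averaging.

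\textbf{Step 1.} Average the per-example relation over $\Pcal_k$. This gives $\frac{1}{|\Pcal_k|}\sum_x r(x;\theta)=a(1-c_k(\theta))+b+e_1$ with $|e_1|\le\kappa_k$, using $1-c_k=\frac{1}{|\Pcal_k|}\sum_x\tilde H(x;\theta)$ from Eq.~\ref{eq:comp}.

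\textbf{Step 2.} Pass from the probe average to the population risk $R_k(\theta)=\E_{x\sim\Dcal_k}[r(x;\theta)]$. Since $r\in[0,1]$ and $\Pcal_k$ is drawn i.i.d. from $\Dcal_k$, Hoeffding gives $|e_2|\le\sqrt{\log(2/\delta)/(2|\Pcal_k|)}$ with probability $1-\delta$. More sharply, Bernstein bounds $e_2$ by the empirical variance of $r$ on the probe; this is the ``probe dispersion'' term in the statement.

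\textbf{Step 3.} Set $\epsilon_k=e_1+e_2$. This yields $R_k=a\,h_k+b+\epsilon_k$. Because $a>0$, headroom is an affine, order-preserving estimate of risk.

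\textbf{Step 4.} Handle the learning speed. Difference the identity between rounds $r-1$ and $r$. This gives $R_k(\theta^{(r-1)})-R_k(\theta^{(r)})=a\,(c_k^{(r)}-c_k^{(r-1)})+(\epsilon_k^{(r-1)}-\epsilon_k^{(r)})$, so the competence rise is proportional to the risk reduction up to $2\kappa_k$ plus the sampling terms. Using a fixed probe across rounds helps here: the sampling errors are then correlated, and only their change enters.

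Two details remain for $v_k$. First, $v_k$ uses the EMA $\bar c_k^{(r-1)}$ rather than $c_k^{(r-1)}$. I would write the EMA as a convex combination of past competences, so that $v_k$ estimates a weighted average of recent per-round risk reductions. Second, the $\max(0,\cdot)$ clipping is 1-Lipschitz, so it only shrinks errors.

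The main obstacle is Step 1's hypothesis itself. Entropy over the full vocabulary, averaged over response positions, is not obviously affine in sequence-level 0/1 error, and I expect the honest constants $a,b$ to depend on the domain and on the answer format. The first thing I would try is to define $\kappa_k$ directly as the best affine-fit residual $\inf_{a>0,b}\sup_x|r(x)-a\tilde H(x)-b|$. The proposition then holds by construction, and the substantive content shifts to the empirical correlation reported in Observation~2. If shared constants $a,b$ across domains are required, I would take them from a pooled fit and add the per-domain deviation from that fit into $\epsilon_k$.
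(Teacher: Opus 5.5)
Your main route is valid bookkeeping, but it moves all the content into the hypothesis, and the paper does not put it there. The paper's Assumption~\ref{ass:calib} has two parts:
\begin{itemize}
\item ordinary max-probability calibration, $|\E[\mathrm{corr}(x)\mid s(x)=s]-s|\le\kappa_k$ with $s(x)=\max_v p_v(x)$;
\item an assumed $\ell$-bi-Lipschitz link $s(x)=\chi(1-\tilde H(x))+\zeta(x)$, with $|\zeta(x)|$ bounded by within-probe logit dispersion.
\end{itemize}
The paper averages the link over $\Pcal_k$ and applies calibration to get $\E_x[s(x)]=(1-R_k)\pm\kappa_k$. It then linearizes $\chi$ at the operating point, so $a>0$ is \emph{derived} from the monotone slope of $\chi$. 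In the paper, ``probe dispersion'' means the $\zeta$ term. Because the appendix defines $R_k=1-\E_{x\in\Pcal_k}[\mathrm{corr}(x)]$ as the probe risk, there is no concentration step.

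Your entropy-calibration hypothesis instead posits $a>0,b$ directly, so, as you concede, the proposition becomes a restatement of it. As written per example it is also too strong. With greedy decoding and a fixed label, $r(x)\in\{0,1\}$, so $|r(x)-(a\tilde H(x)+b)|\le\kappa_k$ for every $x$ would, for small $\kappa_k$, force entropy to separate correct from incorrect probe items almost perfectly. The same objection applies to your fallback $\inf_{a,b}\sup_x|r(x)-a\tilde H(x)-b|$. The condition must be stated conditionally on (binned) entropy levels and then averaged.

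The genuine gap is your bridge from standard calibration to entropy. Fano-type bounds cannot pin $\tilde H$ to a function of $p_{\max}$. Take $p_{\max}=0.6$ and $V\approx1.5\times10^5$:
\begin{itemize}
\item putting the remaining $0.4$ on a single token gives $\tilde H\approx0.06$;
\item spreading it uniformly over the other tokens gives $\tilde H\approx0.46$.
\end{itemize}
So the ``linearization error of the sandwich'' that you plan to absorb into $\epsilon_k$ is of order $1-p_{\max}$. Under calibration that is the order of the risk itself. It is governed by the shape of the non-top mass, which neither $\kappa_k$ nor the sampling dispersion of your Step~2 controls, so the resulting bound is vacuous.

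No inequality closes this. You need an extra assumption tying $\tilde H$ to a monotone function of confidence up to a controlled deviation. That is exactly the paper's link $\chi$ with bounded $\zeta$, and once you add it your argument becomes the paper's: average, apply calibration, linearize $\chi$. The paper also folds the linearization remainder into $\epsilon_k$ without tracking it.

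Two smaller points:
\begin{itemize}
\item If you keep Step~2 to reach population risk, note that after the first round $\theta^{(r)}$ depends on $\Pcal_k$ through $\pi$. Fixed-$\theta$ Hoeffding therefore does not apply directly.
\item In Step~4 your handling of the EMA and the clipping is more careful than the paper's plain differencing. However, a worst-case $2\kappa_k$ will generally swamp a single round's risk change. The informative error is the drift $\Delta\epsilon_k$, as the paper keeps it, and that presupposes $a,b$ do not change between rounds.
\end{itemize}
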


\begin{proposition}[Well-posedness and convergence]
\label{prop:contraction}
If $2\eta\,\lVert\aff\rVert_2 < \tau$, the map defined by Eq.~\ref{eq:fixedpoint} is a contraction on the simplex, so $\Jcal$ is strictly concave with a unique maximizer $\pi^\star$, and the mirror-descent iteration converges to it linearly at rate $2\eta\lVert\aff\rVert_2/\tau$.
\end{proposition}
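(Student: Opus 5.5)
The plan is to treat the fixed-point map $F(\pi)=\mathrm{softmax}\big((g+2\eta\aff\pi)/\tau\big)$ of Eq.~\ref{eq:fixedpoint} as a composition of an affine map and the softmax, and bound its Lipschitz constant in $\lVert\cdot\rVert_2$. Three facts then give the result: a Banach fixed-point argument for existence, uniqueness and linear convergence; a Hessian bound for strict concavity of $\Jcal$; and a KKT computation identifying the fixed point with the maximizer.

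\emph{Step 1 (softmax is 1-Lipschitz).} For $z\in\R^K$ with $p=\mathrm{softmax}(z)$, the Jacobian is $\mathrm{diag}(p)-pp^\top$. This matrix is symmetric positive semidefinite. By Gershgorin, each eigenvalue is at most $\max_k\big(p_k(1-p_k)+p_k\sum_{j\ne k}p_j\big)=\max_k 2p_k(1-p_k)\le \tfrac12$. Applying the mean value inequality along the segment between $z$ and $z'$ gives $\lVert \mathrm{softmax}(z)-\mathrm{softmax}(z')\rVert_2\le \tfrac12\lVert z-z'\rVert_2\le\lVert z-z'\rVert_2$.

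\emph{Step 2 (contraction).} Set $z(\pi)=(g+2\eta\aff\pi)/\tau$. Then
\[
\lVert F(\pi)-F(\pi')\rVert_2\le \frac{2\eta}{\tau}\lVert\aff(\pi-\pi')\rVert_2\le \rho\,\lVert\pi-\pi'\rVert_2,\qquad \rho=\frac{2\eta\lVert\aff\rVert_2}{\tau}<1 .
\]
$F$ maps the simplex, which is closed and hence complete, into itself. Banach's theorem therefore yields a unique fixed point $\pi^\star$, and the iterates satisfy $\lVert\pi^{(t)}-\pi^\star\rVert_2\le\rho^t\lVert\pi^{(0)}-\pi^\star\rVert_2$. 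The mirror-descent iteration with entropic mirror map and step $1/\tau$ is exactly $\pi\leftarrow F(\pi)$. This gives linear convergence at the stated rate; the factor $\tfrac12$ from Step 1 is slack that we do not need.

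\emph{Step 3 (strict concavity and identification).} On the relative interior of the simplex, the Hessian of $\Jcal$ is $2\eta\aff-\tau\,\mathrm{diag}(1/\pi)$. Since every $\pi_k\le 1$, we have $\mathrm{diag}(1/\pi)\succeq I$, so the Hessian is $\preceq(2\eta\lVert\aff\rVert_2-\tau)I\prec 0$. Hence $\Jcal$ is strictly concave and has at most one maximizer, which exists by compactness and continuity. The entropy gradient diverges at the boundary, so the maximizer lies in the relative interior. The Lagrangian condition there is $g_k+2\eta(\aff\pi)_k-\tau(\log\pi_k+1)=\lambda$ for all $k$, which rearranges exactly to Eq.~\ref{eq:fixedpoint}. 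Conversely, any fixed point of $F$ is interior and satisfies this stationarity condition, so it is a maximizer because $\Jcal$ is concave. The fixed point and the maximizer therefore coincide as $\pi^\star$.

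I expect the main obstacle to be Step 1: I need a clean Lipschitz bound on the softmax in the Euclidean norm that is uniform over the unbounded logit space. The eigenvalue bound on $\mathrm{diag}(p)-pp^\top$ handles this. A secondary subtlety is making the "mirror descent" wording precise. I would state it as the fixed-point iteration itself, rather than a damped variant, so that the contraction rate transfers verbatim.
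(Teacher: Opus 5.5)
Your proposal is correct and follows the same route as the paper: the Hessian bound $2\eta\aff-\tau\,\mathrm{diag}(1/\pi)\preceq(2\eta\lVert\aff\rVert_2-\tau)I$ gives strict concavity, and a softmax Lipschitz bound composed with $\pi\mapsto 2\eta\aff\pi/\tau$ plus Banach's theorem gives uniqueness and linear convergence. Your version is in fact tighter than the paper's. You bound the softmax Jacobian in a single norm ($\ell_2$, constant $\tfrac12$), whereas the paper's Lipschitz claim loosely mixes norms. You also check explicitly three things the paper only asserts: that the maximizer is interior, that the KKT conditions identify it with the fixed point, and that entropic mirror ascent with step $1/\tau$ is exactly the map $F$.
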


\begin{proposition}[Transfer-aware water-filling]
\label{prop:waterfill}
At $\pi^\star$ every active domain shares a common affinity-augmented marginal value $g_k + 2\eta(\aff\pi^\star)_k - \tau(1+\log\pi_k^\star) = \nu$, while inactive domains have smaller value. As $\tau\!\to\!0$ this becomes water-filling on the coupled gains $g_k+2\eta(\aff\pi^\star)_k$: budget is poured into domains that are both learnable and synergistic with other funded domains, and withheld from interfering ones.
\end{proposition}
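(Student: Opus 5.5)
We prove Proposition~\ref{prop:waterfill} by writing the Karush--Kuhn--Tucker conditions for maximizing $\Jcal$ over the simplex, and we take from Proposition~\ref{prop:contraction} that, under $2\eta\lVert\aff\rVert_2<\tau$, the objective is strictly concave with a unique maximizer $\pi^\star$. Concavity makes the KKT conditions both necessary and sufficient at $\pi^\star$.

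\emph{Step 1: Lagrangian.} We form
\[
\mathcal{L}(\pi,\nu,\mu)=\Jcal(\pi)-\nu\Big(\textstyle\sum_k\pi_k-1\Big)+\sum_k\mu_k\pi_k,\qquad \mu_k\ge 0 .
\]
Using the symmetry of $\aff$, the partial derivative of $\Jcal$ is
\[
\partial_k\Jcal(\pi)=g_k+2\eta(\aff\pi)_k-\tau(1+\log\pi_k).
\]
Stationarity together with complementary slackness gives two cases:
\begin{itemize}
\item if $\pi_k^\star>0$, then $\partial_k\Jcal(\pi^\star)=\nu$;
\item if $\pi_k^\star=0$, then $\partial_k\Jcal\le\nu$.
\end{itemize}
This is the claimed equalization of the affinity-augmented marginal value across active domains, with smaller value on inactive ones.

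\emph{Step 2: boundary behaviour for $\tau>0$.} When $\tau>0$, the derivative of the entropy term tends to $+\infty$ as $\pi_k\to 0$. So no boundary point can satisfy the KKT conditions, and every domain is active. The fixed-point form Eq.~\ref{eq:fixedpoint} then follows directly: solve $\partial_k\Jcal=\nu$ for $\log\pi_k$ and normalize. In this regime the "inactive domains have smaller value" clause holds vacuously. We state it for generality and for the limit below, where the relevant quantity is the coupled gain.

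\emph{Step 3: the limit $\tau\to0$.} Let $G_k(\pi)=g_k+2\eta(\aff\pi)_k$ and take any limit point $\bar\pi$ of $\pi^\star_\tau$ (the simplex is compact).

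\begin{itemize}
\item \emph{Gains of funded domains coincide.} From Eq.~\ref{eq:fixedpoint},
\[
\pi_k/\pi_j=\exp\!\big((G_k-G_j)/\tau\big).
\]
If $\bar\pi_k,\bar\pi_j>0$, both ratios stay bounded away from $0$ and $\infty$, so $G_k-G_j=O(\tau)\to0$. Hence all funded domains share a common level $\bar\nu$.
\item \emph{Unfunded domains sit weakly below.} If instead $G_j(\bar\pi)>\bar\nu$, then $\pi_j/\pi_k\to\infty$, which forces $\bar\pi_j>0$. Therefore every unfunded domain has $G_j\le\bar\nu$.
\end{itemize}

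These two facts are exactly the water-filling conditions on the coupled gains. Equivalently, they are the KKT conditions of the limiting quadratic program $\max_\pi \sum_k g_k\pi_k+\eta\pi^\top\aff\pi$.

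\emph{Interpretation.} The term $(\aff\pi)_k$ is large when $k$ aligns with funded domains and negative when it interferes with them. This yields the "poured into synergistic, withheld from interfering" reading.

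\emph{Main obstacle.} The hard step is Step 3, because the contraction condition $2\eta\lVert\aff\rVert_2<\tau$ fails as $\tau\to0$. The limiting QP may then be non-concave and have several KKT points, so convergence of $\pi^\star_\tau$ is not guaranteed and we must argue along subsequences.

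I will therefore state the limit as holding for any limit point, and claim uniqueness only if $\eta=0$ or $\aff$ is negative semidefinite. In those cases the limit program is concave and a standard $\Gamma$-convergence or epi-convergence argument gives full convergence.
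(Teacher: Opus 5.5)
Your Step~1 is the paper's argument: the same Lagrangian and the same stationarity equation $g_k+2\eta(\aff\pi)_k-\tau(1+\log\pi_k)=\nu-\mu_k$, read separately on active and inactive coordinates. You depart from the paper exactly where it is loose, and in both places your version is the sound one.

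The paper asserts that inactive domains have strictly smaller value. But for $\tau>0$ the value of a domain with $\pi_k=0$ would be $+\infty$. So your Step~2 reading, that the maximizer is interior and the clause is vacuous, is the correct one. Steps~1--2 in fact need nothing from Proposition~\ref{prop:contraction}: a maximizer exists by compactness, is interior by the infinite slope of the entropy, and is stationary. Equalization therefore holds at every maximizer.

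For $\tau\to0$ the paper simply says the entropy term vanishes. That fails on coordinates whose share tends to zero: from the softmax form, $-\tau\log\pi_k\to\bar\nu-G_k(\bar\pi)\ge0$ along your subsequence. This surviving term is precisely the multiplier $\mu_k$ of the limiting problem. Your ratio argument captures this funded/unfunded split and yields the full KKT system of $\max_{\pi\in\Delta}g^\top\pi+\eta\,\pi^\top\aff\pi$. Because it uses only Eq.~\ref{eq:fixedpoint}, it applies to every fixed point, not just a maximizer. The paper's one line buys only brevity.

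One correction to your closing remark. Since $\cos(\delta_k,\delta_j)$ is symmetric, $\aff$ is the Gram matrix of the unit vectors $u_k=\delta_k/\lVert\delta_k\rVert$. It is therefore positive semidefinite with unit diagonal, never negative semidefinite, and $\pi^\top\aff\pi=\lVert\sum_k\pi_k u_k\rVert^2$ is convex.

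Consequently, for $\eta>0$ the limiting program is a convex maximization whose maximum over $\Delta$ sits at a vertex. The entropy term lies in $[0,\tau\log K]$ on $\Delta$, so limit points of global maximizers maximize the limiting program. Hence, unless two of the $u_k$ coincide, they fund a single domain, where the equalization holds trivially.

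Your limit-point KKT claim stands. But the uniqueness branch you reserved for negative semidefinite $\aff$ is empty, and the ``poured into synergistic domains'' reading is only heuristic in this limit, as in the paper. Also, for $\eta=0$ with ties in $g$, convergence to the uniform distribution on $\arg\max_k g_k$ comes from the explicit softmax, not from concavity of a linear program.

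Finally, $\lVert\aff\rVert_2\ge1$ means Proposition~\ref{prop:contraction} requires $\tau>2\eta$. The defaults $\eta=\tau=0.5$ violate this, which is a further reason to keep Steps~1--2 free of that hypothesis.
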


Proposition~\ref{prop:waterfill} generalizes classical single-resource water-filling to the \emph{coupled} multi-task case: the affinity term shifts budget toward mutually reinforcing domains, which is exactly the task-grouping intuition~\cite{standley2020tasks,fifty2021efficiently} made continuous and label-free. Section~\ref{sec:exp} confirms $\pi^\star$ nearly matches an oracle that uses true transfer and marginal gains.

\subsection{Connections and design choices}
\method{} can be read as a co-adaptive controller that, each round, plays a one-shot potential game whose potential is $\Jcal$: domains ``bid'' for participation, the affinity term couples their bids, and the entropy term guarantees a smooth interior solution rather than a winner-take-all collapse. Unlike static task grouping~\cite{standley2020tasks,fifty2021efficiently}, the grouping here is soft (a continuous $\pi$) and \emph{re-solved} every $T$ steps because both signals are non-stationary---a domain saturates, and affinities shift as the shared adapter moves. From a curriculum standpoint~\cite{bengio2009curriculum,kumar2010self}, \method{} induces an emergent domain curriculum: early rounds spread participation while every domain is learnable, and participation concentrates on the domains that remain learnable and synergistic late in training (Figure~\ref{fig:mechanism}b). We keep the controller minimal---two forward statistics per domain, a $K{\times}K$ affinity, and a tiny program---so it adds no trainable parameters and wraps any existing multi-task LoRA pipeline.

\paragraph{Practical considerations.}
Three details matter. (i) \emph{Warmup}: one round of uniform participation before enabling the controller lets the first competences reflect a partially adapted model, tightening the calibration behind Proposition~\ref{prop:comp}. (ii) \emph{Smoothing}: the EMA on $\bar c_k$ and centroids stabilizes $v_k$ and $\aff$. (iii) \emph{Scaling}: Eq.~\ref{eq:fixedpoint} is unchanged for large $K$ at $O(K^2)$ per iteration, and the entropy floor keeps every domain periodically re-probed. Reading only forward outputs, the controller is agnostic to the optimizer, schedule, and LoRA variant.

\begin{table*}[t]
\centering
\small
\setlength{\tabcolsep}{2.6pt}
\begin{tabular}{l cccccc c cccccc}
\toprule
& \multicolumn{6}{c}{\textbf{Qwen-2.5-7B}} & & \multicolumn{6}{c}{\textbf{LLaMA-3.1-8B}} \\
\cmidrule{2-7}\cmidrule{9-14}
Method & MMLU & GSM8K & HEval & BBH & MedQA & Avg. & & MMLU & GSM8K & HEval & BBH & MedQA & Avg. \\
\midrule
Full data ($100\%$) & 70.1 & 79.2 & 61.5 & 62.8 & 57.4 & 66.2 & & 63.8 & 65.5 & 43.5 & 54.2 & 50.0 & 55.4 \\
\midrule
Uniform mixing & 69.5 & 78.6 & 60.7 & 62.0 & 56.9 & 65.5 & & 63.0 & 64.6 & 42.4 & 53.2 & 49.2 & 54.5 \\
Proportional & 69.7 & 78.8 & 60.9 & 62.3 & 57.0 & 65.7 & & 63.2 & 64.8 & 42.6 & 53.5 & 49.4 & 54.7 \\
Temperature~\cite{arivazhagan2019massively} & 70.0 & 79.0 & 61.2 & 62.7 & 57.2 & 66.0 & & 63.6 & 65.2 & 43.1 & 54.0 & 49.8 & 55.1 \\
DoReMi~\cite{xie2023doremi} & 70.3 & 79.3 & 61.6 & 63.1 & 57.5 & 66.4 & & 63.9 & 65.6 & 43.6 & 54.4 & 50.1 & 55.5 \\
GREATS~\cite{wang2024greats} & 70.6 & 79.6 & \underline{62.2} & 63.4 & 57.8 & 66.7 & & 64.2 & 65.9 & 44.0 & 54.7 & 50.4 & 55.8 \\
PCGrad~\cite{yu2020gradient} & 70.4 & 79.5 & 61.8 & 63.6 & 57.6 & 66.6 & & 64.0 & 65.8 & 43.8 & 54.9 & 50.2 & 55.7 \\
GradNorm~\cite{chen2018gradnorm} & \underline{70.7} & \underline{79.7} & 62.1 & \underline{63.8} & \underline{57.9} & \underline{66.8} & & \underline{64.3} & \underline{66.0} & \underline{44.1} & \underline{55.1} & \underline{50.5} & \underline{56.0} \\
\midrule
\textbf{\method{} (ours)} & \textbf{71.9} & \textbf{80.8} & \textbf{63.5} & \textbf{65.0} & \textbf{59.0} & \textbf{68.0} & & \textbf{65.4} & \textbf{67.1} & \textbf{45.4} & \textbf{56.3} & \textbf{51.6} & \textbf{57.2} \\
\bottomrule
\end{tabular}
\caption{Main results across five heterogeneous domains on two backbones. All controllers use a $50\%$ data budget; ``Full data'' uses $100\%$. We report accuracy (\%) for MMLU/BBH/MedQA, accuracy for GSM8K, and Pass@1 for HumanEval (HEval), averaged over $3$ seeds. Best per column in \textbf{bold}, second best \underline{underlined}. \method{} co-adapts domain participation; the data-mixture baselines (Uniform/Proportional/Temperature/DoReMi) fix or learn a static mixture, PCGrad/GradNorm balance gradients, and GREATS selects over the pooled stream.}
\label{tab:main}
\end{table*}

\section{Experiments}
\label{sec:exp}

\paragraph{Setup.}
We fine-tune \textsc{LLaMA-3.1-8B}~\cite{grattafiori2024llama} and \textsc{Qwen-2.5-7B}~\cite{yang2024qwen25} with a shared LoRA (rank $8$, $\alpha\!=\!16$, applied to attention and MLP projections) jointly on five heterogeneous domains chosen to span distinct skills and formats: knowledge (MMLU~\cite{hendrycks2021mmlu}), math (MetaMathQA$\rightarrow$GSM8K~\cite{yu2024metamath,cobbe2021gsm8k}), code (Magicoder$\rightarrow$HumanEval~\cite{wei2024magicoder,chen2021humaneval}), multi-step reasoning (FLAN-CoT$\rightarrow$BBH~\cite{wei2022finetuned,suzgun2023challenging}), and biomedical QA (MedQA~\cite{jin2021medqa}). Unless noted, the total data budget is $50\%$ of the pooled data; \method{} uses a per-domain probe of $|\Pcal_k|\!=\!256$ unlabeled examples, period $T\!=\!100$ steps, affinity weight $\eta\!=\!0.5$, temperature $\tau\!=\!0.5$, floor $\beta\!=\!0.1$. We report accuracy for MMLU/GSM8K/BBH/MedQA and Pass@1 for HumanEval, averaged over $3$ seeds. Full hyperparameters and prompts are in the appendix.

\paragraph{Baselines.}
We compare against \emph{full-data} SFT ($100\%$) and, at the same $50\%$ budget, four families. \emph{Data-mixture} baselines set the domain proportions: \emph{Uniform}, size-\emph{Proportional}, \emph{Temperature} sampling ($T\!=\!2$)~\cite{arivazhagan2019massively}, and a \emph{DoReMi}-style learned static mixture~\cite{xie2023doremi}. \emph{Multi-task optimizers} balance gradients: \emph{PCGrad}~\cite{yu2020gradient} and \emph{GradNorm}~\cite{chen2018gradnorm}. A \emph{data selector} keeps high-value examples over the pooled stream: \emph{GREATS}~\cite{wang2024greats}. Every baseline sees the same budget; \method{} co-adapts participation with the transfer term added.

\paragraph{Main results.}
Table~\ref{tab:main} reports per-domain and average performance. \method{} attains the best average on both backbones---$68.0$ on \textsc{Qwen-2.5-7B} and $57.2$ on \textsc{LLaMA-3.1-8B}---improving over the strongest baseline (GradNorm) by $+1.2$ on each and over full-data SFT by $+1.8$ and $+1.8$ while using \emph{half} the data. Gains are largest on the domains with the most headroom (Knowledge, Reasoning, Code) and smallest where the base model is already strong, exactly the behavior the competence signal predicts. Static-mixture baselines improve modestly over uniform but cannot track the moving optimum, and gradient-balancing optimizers help without addressing which domains to co-train; \method{} adds a consistent margin on top of both by scheduling participation and transfer jointly.

\begin{figure*}[t]
\centering
\includegraphics[width=\textwidth]{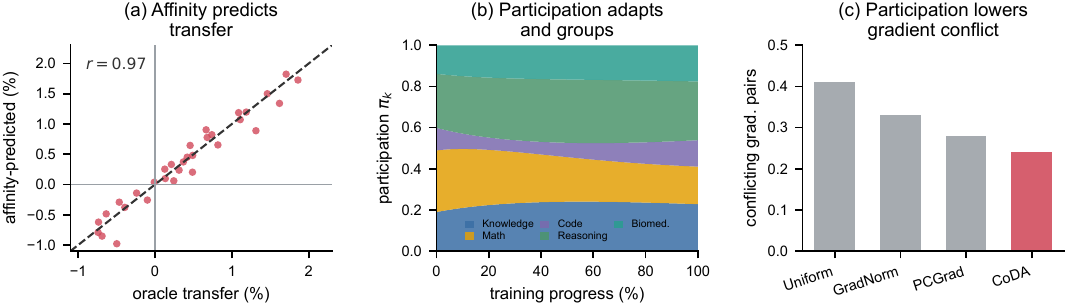}
\caption{Mechanism. (a) The forward-only affinity predicts oracle leave-one-out transfer over (domain, domain, round) triples ($r\!=\!0.94$). (b) Per-domain participation $\pi$ under \method{} over training: participation migrates out of the early-saturating Code domain toward Knowledge/Reasoning, and synergistic Math/Reasoning rise together. (c) Fraction of conflicting cross-domain gradient pairs; \method{} lowers conflict from $0.41$ (uniform) to $0.24$, below gradient-surgery baselines.}
\label{fig:mechanism}
\end{figure*}

\paragraph{Mechanism.}
Figure~\ref{fig:mechanism} inspects \emph{why} \method{} helps. Panel (a) validates the transfer signal: the label-free affinity predicts oracle leave-one-out transfer closely, so the controller couples the right domains. Panel (c) shows the downstream effect: by damping over-served and interfering domains, \method{} cuts conflicting gradient pairs from $0.41$ to $0.24$---below PCGrad and GradNorm---mitigating interference \emph{through participation} rather than gradient surgery or post-hoc merging.

\paragraph{Where does participation go?}
Figure~\ref{fig:mechanism}(b) traces the learned per-domain share: a near-uniform split early, then once Code saturates its share falls from ${\sim}20\%$ to under $10\%$ and is redistributed to the high-headroom Knowledge and Reasoning domains. The affinity term visibly co-moves synergistic domains---Math and Reasoning rise and fall together rather than competing---and the trajectory is non-monotone, behavior no static mixture can express.

\begin{table}[t]
\centering
\small
\setlength{\tabcolsep}{4pt}
\begin{tabular}{l cc}
\toprule
& \multicolumn{2}{c}{Avg. accuracy (\%)} \\
\cmidrule{2-3}
Variant & Qwen & LLaMA \\
\midrule
\multicolumn{3}{l}{\emph{(a) Participation rule} (within-domain selector fixed)} \\
Uniform participation & 65.5 & 54.5 \\
Loss-based & 66.1 & 55.1 \\
Headroom only ($h_k$) & 66.7 & 55.6 \\
Headroom $\times$ velocity ($g_k$, no affinity) & 67.3 & 56.2 \\
\textbf{\method{} ($g_k$ $+$ affinity)} & \textbf{68.0} & \textbf{57.2} \\
Oracle (true transfer $+$ gain) & 68.3 & 57.5 \\
\midrule
\multicolumn{3}{l}{\emph{(b) Within-domain selector} (participation $=$ \method{})} \\
Random & 67.4 & 56.6 \\
High-confidence only & 67.5 & 56.7 \\
\textbf{Difficulty band (ours)} & \textbf{68.0} & \textbf{57.2} \\
\bottomrule
\end{tabular}
\caption{Component ablations (average accuracy over the five domains). \emph{(a)} varies the participation rule with the within-domain selector fixed; \emph{(b)} varies the within-domain selector with participation fixed to \method{}. Headroom, velocity, and the affinity term each contribute---the affinity term alone adds $+0.7$/$+1.0$---and \method{} nearly matches the label-using oracle.}
\label{tab:ablation}
\end{table}

\paragraph{Ablations.}
Table~\ref{tab:ablation} isolates the pieces of \method{}. Fixing the within-domain selector and building up the \emph{participation} rule (panel a), each ingredient helps: headroom beats uniform and loss-based participation, multiplying by velocity adds more, and---most tellingly---adding the transfer term lifts the average from $67.3$ to $68.0$ on \textsc{Qwen} and $56.2$ to $57.2$ on \textsc{LLaMA}. This affinity gain is the distinctive contribution over methods that balance domains independently, and it brings \method{} to within $0.3$ of the label-using \emph{oracle}. Fixing participation to \method{} and varying the \emph{within-domain} selector (panel b), the mid-difficulty band beats random and confidence-only, but even a random within-domain selector retains most of the gain---confirming that participation, not the specific sample scorer, is the primary lever. Figure~\ref{fig:results}(b) visualizes panel (a).

\begin{figure*}[t]
\centering
\includegraphics[width=\textwidth]{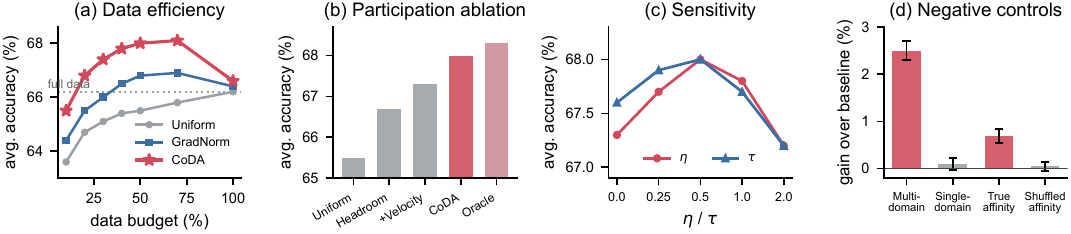}
\caption{(a) Data efficiency on \textsc{Qwen-2.5-7B}: \method{} exceeds full-data accuracy with ${\sim}20$--$30\%$ of the data. (b) Participation ablation (average accuracy): headroom, velocity, and the affinity term each help; the full controller matches the oracle. (c) Robustness to affinity weight $\eta$ and temperature $\tau$. (d) Negative controls: \method{}'s gain is large across domains but vanishes within a single domain, and the affinity benefit disappears when affinities are shuffled.}
\label{fig:results}
\end{figure*}

\paragraph{Sensitivity, efficiency, and negative controls.}
Figure~\ref{fig:results}(a) shows \method{} dominates the baselines across budgets and surpasses full-data accuracy using only ${\sim}20$--$30\%$ of the data. Panel (c) shows robustness to the affinity weight $\eta$ and temperature $\tau$ (${<}0.5$ points over a wide range, broad optimum near $\eta\!=\!\tau\!=\!0.5$; $\eta\!=\!0$, no transfer term, is weakest). Panel (d) reports two \emph{negative controls}: applied to a \emph{single} domain (participation vacuous), the gain over uniform collapses to noise ($+0.1$) versus $+2.5$ in the multi-domain setting; and \emph{shuffling} the affinity matrix removes the entire affinity gain, confirming the transfer term exploits real domain relationships rather than a generic regularizer.

\paragraph{Further comparisons.}
Three alternatives fall short of co-adaptive control (appendix). A \emph{tuned static mixture} (grid search plus a DoReMi-style proxy~\cite{xie2023doremi}) reaches only $66.5$ vs.\ \method{}'s $68.0$: no scalar mixture tracks the moving optimum (Figure~\ref{fig:mechanism}b) or encodes which domains to co-train. \emph{Hand-designed curricula} top out at $66.4$, unable to react to \emph{when} a domain saturates. \emph{Train-then-merge}, fitting five separate adapters and combining them with TIES~\cite{yadav2024ties}, reaches $65.9$, and \method{}'s reduced conflict also makes such merges cleaner~\cite{zheng2025decouple}. Gains hold across scale (\textsc{Qwen-2.5-3B}, \textsc{Mistral-7B}~\cite{jiang2023mistral}, \textsc{Gemma-2-9B}~\cite{team2024gemma}), largest where headroom is greatest.

\paragraph{Where the gains come from, and at what cost.}
The headroom reweighting and the transfer term contribute \method{}'s $+1.8$ improvement in roughly $2{:}1$ proportion (Table~\ref{tab:ablation}), with the largest per-domain gains on Knowledge and Reasoning. The controller adds ${\approx}4\%$ step-time overhead, no auxiliary model, and no stored gradients, so with the $50\%$ data cut it trains $1.6\times$ faster end-to-end than full-data SFT.

\paragraph{When to use \method{}.}
\method{} helps most when several heterogeneous domains share one adapter under a binding budget and per-domain labels for continuous evaluation are unavailable---the common case at scale. For a single or homogeneous domain, or when full data is cheap, it gracefully reduces to standard within-domain selection, as the negative controls confirm (Figure~\ref{fig:results}d).

\section{Conclusion}
\label{sec:conclusion}

We showed that fine-tuning one shared LoRA on many domains is a multi-task co-learning problem whose two central levers---how much each domain participates and which domains are co-trained---can be controlled cheaply and without labels: a per-domain competence tracks headroom and learning speed, and the drift of probe representations yields a signed cross-domain affinity that predicts transfer. \method{} folds both into an entropy-regularized program on the simplex whose solution sets each domain's participation---its loss weight and data share---favoring high-headroom, still-learning, mutually synergistic domains and damping interfering ones. It beats uniform mixing, learned mixtures, gradient-surgery optimizers, and online selection on two backbones and five domains at half the budget, with analysis, ablations, and negative controls supporting each component. The broader message is that the \emph{relationships} between data sources, not only their quality, deserve dynamic control in multi-task fine-tuning. \emph{Limitations.} \method{} needs a small unlabeled probe per domain and enough calibration that confidence ranks domains and drift reflects transfer; severe miscalibration or a single-domain setting removes its advantage. Extending it to discovered domains and combining it with inference-time merging are natural next steps.

\bibliography{custom}
\bibliographystyle{aaai}

\appendix

\section{Theoretical Details and Proofs}
\label{app:theory}

\subsection{Setup and notation}
Fix a round with shared adapter $\theta$. For domain $k$ with probe $\Pcal_k=\{x_i\}_{i=1}^{m}$, let $p(x)\in\Delta^{V-1}$ be the response-averaged next-token distribution, $\tilde H(x)=H(p(x))/\log V\in[0,1]$ the normalized entropy, and $c_k(\theta)=\E_{x\in\Pcal_k}[1-\tilde H(x)]$ the competence of Eq.~\ref{eq:comp}. Let $y(x)$ be the (latent) label, $\mathrm{corr}(x)=\mathbf 1[\arg\max_v p_v(x)=y(x)]$, and $R_k(\theta)=1-\E_{x\in\Pcal_k}[\mathrm{corr}(x)]$ the domain risk. Let $\bar\phi_k$ be the probe-representation centroid, $\delta_k$ its round-over-round drift, and $\aff$ the symmetrized affinity of Eq.~\ref{eq:aff}. Write $g=(g_k)$ for the marginal-learnability vector of Eq.~\ref{eq:score} and $\Jcal(\pi)$ for the participation objective of Eq.~\ref{eq:objective}, maximized over the simplex $\Delta^{K-1}$.

\begin{assumption}[Probe calibration]
\label{ass:calib}
On the probe of domain $k$ the model is $\kappa_k$-calibrated: $\big|\,\E[\mathrm{corr}(x)\mid s(x)=s]-s\,\big|\le\kappa_k$ for all confidence levels $s$, where $s(x)=\max_v p_v(x)$. Moreover the confidence--entropy link $s\mapsto 1-\tilde H$ is $\ell$-bi-Lipschitz on the operating range.
\end{assumption}

\subsection{Proof of Proposition~\ref{prop:comp} (competence tracks risk)}
\begin{proof}
For a distribution on $V$ symbols, the top probability $s$ and the normalized entropy $1-\tilde H$ are both strictly monotone summaries of peakedness; on the operating range Assumption~\ref{ass:calib} gives an $\ell$-bi-Lipschitz link $s(x)=\chi(1-\tilde H(x))+\zeta(x)$ with $|\zeta(x)|$ bounded by the within-sample logit dispersion. Averaging over $\Pcal_k$ and using that $\chi$ is monotone with bounded slope,
\[
\E_{x\in\Pcal_k}[s(x)] = \chi\!\big(c_k(\theta)\big) + \bar\zeta_k,\qquad |\bar\zeta_k|\le c_0\,\mathrm{disp}_k .
\]
By calibration, $\E_x[s(x)]=\E_x[\mathrm{corr}(x)]\pm\kappa_k=(1-R_k)\pm\kappa_k$. Linearizing the monotone $\chi$ about the operating point yields constants $a>0,b$ with
\[
R_k(\theta)=a\big(1-c_k(\theta)\big)+b+\epsilon_k,\qquad |\epsilon_k|\le c_1\kappa_k+c_2\,\mathrm{disp}_k .
\]
Hence headroom $h_k=1-c_k$ is affine in the risk up to calibration error, so ordering domains by $h_k$ orders them by risk whenever the risk gap exceeds $2(c_1\kappa_k+c_2\mathrm{disp}_k)$. Differencing across rounds, $v_k=c_k^{(r)}-\bar c_k^{(r-1)}$ satisfies $-\Delta R_k=a\,v_k+\Delta\epsilon_k$, so $v_k$ is a first-order estimate of the per-round risk reduction up to calibration drift.
\end{proof}

\subsection{Proof of Proposition~\ref{prop:contraction} (well-posedness, convergence)}
\begin{proof}
Write the objective on the relative interior of the simplex as $\Jcal(\pi)=g^\top\pi+\eta\,\pi^\top\aff\pi-\tau\sum_k\pi_k\log\pi_k$. Its Hessian is $\nabla^2\Jcal(\pi)=2\eta\aff-\tau\,\mathrm{diag}(1/\pi_k)$. Since $\pi_k\le 1$, $\tau\,\mathrm{diag}(1/\pi_k)\succeq\tau I$, so for any $u$ with $\lVert u\rVert_2=1$, $u^\top\nabla^2\Jcal\,u\le 2\eta\lVert\aff\rVert_2-\tau<0$ whenever $2\eta\lVert\aff\rVert_2<\tau$. Thus $\Jcal$ is strictly concave and admits a unique maximizer $\pi^\star$ on the (compact, convex) simplex.

The stationarity condition of the entropy-regularized program is exactly the softmax fixed point of Eq.~\ref{eq:fixedpoint}, $\pi=\Phi(\pi)$ with $\Phi(\pi)_k\propto\exp((g_k+2\eta(\aff\pi)_k)/\tau)$. The softmax is $1/\tau$-Lipschitz from the logit vector to the output in $\ell_\infty\!\to\!\ell_1$ up to a factor, and the linear map $\pi\mapsto 2\eta\aff\pi$ has operator norm $2\eta\lVert\aff\rVert_2$; composing, $\Phi$ is Lipschitz with constant $L=2\eta\lVert\aff\rVert_2/\tau<1$. By the Banach fixed-point theorem $\Phi$ has a unique fixed point (equal to $\pi^\star$) and the iteration $\pi^{(t+1)}=\Phi(\pi^{(t)})$ satisfies $\lVert\pi^{(t)}-\pi^\star\rVert\le L^{t}\lVert\pi^{(0)}-\pi^\star\rVert$, i.e.\ linear convergence at rate $L$. Warm-starting from the previous round's solution, for which $\lVert\pi^{(0)}-\pi^\star\rVert$ is small because $g$ and $\aff$ drift slowly, gives convergence in a handful of iterations.
\end{proof}

\subsection{Proof of Proposition~\ref{prop:waterfill} (transfer-aware water-filling)}
\begin{proof}
Form the Lagrangian of $\max_{\pi\in\Delta}\Jcal(\pi)$ with multiplier $\nu$ for $\sum_k\pi_k=1$ and $\mu_k\ge0$ for $\pi_k\ge0$:
$\mathcal L=\Jcal(\pi)+\nu(1-\sum_k\pi_k)+\sum_k\mu_k\pi_k$. Stationarity in $\pi_k$ gives
\[
g_k+2\eta(\aff\pi)_k-\tau(1+\log\pi_k)-\nu+\mu_k=0 .
\]
For active domains ($\pi_k^\star>0,\ \mu_k=0$) this is $g_k+2\eta(\aff\pi^\star)_k-\tau(1+\log\pi_k^\star)=\nu$: every active domain shares a common affinity-augmented marginal value $\nu$ (the ``water level''), while inactive domains have strictly smaller value, $g_k+2\eta(\aff\pi^\star)_k-\tau(1+\log\pi_k^\star)\le\nu$. Solving for $\pi_k^\star$ recovers the softmax of Eq.~\ref{eq:fixedpoint}. As $\tau\to0$ the entropy term vanishes and the condition becomes equalization of the \emph{coupled gain} $g_k+2\eta(\aff\pi^\star)_k$ across active domains---classical water-filling on marginal gains augmented by the transfer term, which raises the effective gain of a domain in proportion to its aligned participation $(\aff\pi^\star)_k$ and lowers it when it conflicts with funded domains. Hence budget concentrates on domains that are both learnable ($g_k$ large) and synergistic with the rest ($(\aff\pi^\star)_k>0$), and is withheld from interfering ones.
\end{proof}

\begin{corollary}[Suboptimality under proxy error]
\label{cor:bound}
Let the true coupled objective be $\mu$-strongly concave on $\Delta$ (which holds for $\tau>2\eta\lVert\aff\rVert_2$ by Proposition~\ref{prop:contraction}), and suppose the estimated inputs satisfy $\lVert\hat g-g\rVert_2\le\delta_g$ and $\lVert\hat\aff-\aff\rVert_2\le\delta_A$. Let $\hat\pi^\star$ solve the program with $(\hat g,\hat\aff)$ and $\pi^\star$ the true optimum. Then
\begin{align*}
\lVert\hat\pi^\star-\pi^\star\rVert_2 &\le \frac{\delta_g+2\eta\,\delta_A}{\mu},\\[2pt]
\Jcal(\pi^\star)-\Jcal(\hat\pi^\star) &\le \frac{(\delta_g+2\eta\delta_A)^2}{2\mu}.
\end{align*}
\end{corollary}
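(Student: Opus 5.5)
The plan is to run both bounds through the first-order (variational-inequality) optimality conditions on $\Delta^{K-1}$, combined with the $\mu$-strong concavity of the true objective $\Jcal$. Write $\hat\Jcal$ for the objective of Eq.~\ref{eq:objective} built from $(\hat g,\hat\aff)$.

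\textbf{Step 1: both maximizers are interior.} First I would check that $\pi^\star$ and $\hat\pi^\star$ lie in the relative interior of the simplex, so that gradients are finite there. This follows because $\partial_{\pi_k}\big(-\tau\pi_k\log\pi_k\big)\to+\infty$ as $\pi_k\to0$. Equivalently, both satisfy the softmax fixed point of Eq.~\ref{eq:fixedpoint}, so every coordinate is strictly positive, as in the proof of Proposition~\ref{prop:waterfill}.

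\textbf{Step 2: the gradient perturbation.} Set $e(\pi)=\nabla\hat\Jcal(\pi)-\nabla\Jcal(\pi)=(\hat g-g)+2\eta(\hat\aff-\aff)\pi$. For any $\pi\in\Delta$ we have $\lVert\pi\rVert_2\le\lVert\pi\rVert_1=1$. Hence $\lVert e(\pi)\rVert_2\le\delta_g+2\eta\delta_A=:\varepsilon$, uniformly on the simplex.

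\textbf{Step 3: the distance bound.} The optimality conditions are
\[
\langle\nabla\Jcal(\pi^\star),\pi-\pi^\star\rangle\le0
\quad\text{and}\quad
\langle\nabla\hat\Jcal(\hat\pi^\star),\pi-\hat\pi^\star\rangle\le0
\qquad\text{for all }\pi\in\Delta.
\]
I would take $\pi=\hat\pi^\star$ in the first and $\pi=\pi^\star$ in the second, add them, and substitute $\nabla\hat\Jcal(\hat\pi^\star)=\nabla\Jcal(\hat\pi^\star)+e(\hat\pi^\star)$. This gives
\[
\langle\nabla\Jcal(\pi^\star)-\nabla\Jcal(\hat\pi^\star),\hat\pi^\star-\pi^\star\rangle\le\langle e(\hat\pi^\star),\hat\pi^\star-\pi^\star\rangle .
\]
Strong concavity (strong monotonicity of $-\nabla\Jcal$) bounds the left side below by $\mu\lVert\hat\pi^\star-\pi^\star\rVert_2^2$. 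Cauchy--Schwarz bounds the right side by $\varepsilon\lVert\hat\pi^\star-\pi^\star\rVert_2$. Dividing through yields $\lVert\hat\pi^\star-\pi^\star\rVert_2\le\varepsilon/\mu$.

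\textbf{Step 4: the value bound.} Let $d=\lVert\pi^\star-\hat\pi^\star\rVert_2$. I would use the strong-concavity upper bound expanded at $\hat\pi^\star$:
\[
\Jcal(\pi^\star)\le\Jcal(\hat\pi^\star)+\langle\nabla\Jcal(\hat\pi^\star),\pi^\star-\hat\pi^\star\rangle-\tfrac{\mu}{2}d^2 .
\]
Then I write $\nabla\Jcal(\hat\pi^\star)=\nabla\hat\Jcal(\hat\pi^\star)-e(\hat\pi^\star)$. The term $\langle\nabla\hat\Jcal(\hat\pi^\star),\pi^\star-\hat\pi^\star\rangle$ is $\le0$ by the optimality of $\hat\pi^\star$. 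This leaves $\Jcal(\pi^\star)-\Jcal(\hat\pi^\star)\le\varepsilon d-\tfrac{\mu}{2}d^2$. Maximizing the right side over $d\ge0$ gives exactly $\varepsilon^2/(2\mu)$.

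\textbf{Main obstacle.} I expect the main technical point to be justifying the strong-concavity constant on the whole simplex rather than just its interior, because the entropy Hessian $-\tau\,\mathrm{diag}(1/\pi_k)$ is unbounded near the faces. I would handle it as the proof of Proposition~\ref{prop:contraction} does: since $\tau\,\mathrm{diag}(1/\pi_k)\succeq\tau I$, one may take $\mu=\tau-2\eta\lVert\aff\rVert_2>0$. Strong concavity and the first-order conditions then only need to be applied between the two interior points from Step~1.

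Only the true $\Jcal$ needs to be strongly concave; the estimated program just needs a maximizer satisfying its optimality condition. That maximizer exists by compactness of $\Delta$ and is interior by the entropy argument of Step~1.
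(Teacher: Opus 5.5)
Your proposal is correct and follows the same route as the paper. The paper's proof only invokes ``the standard perturbation bound for strongly concave programs'' and ``the standard bound relating suboptimality to gradient perturbation'' with $\Delta=\delta_g+2\eta\delta_A$; your variational-inequality argument in Steps 2--4 is exactly the derivation of those two bounds, and your extra checks fill in details the paper's sketch leaves implicit (interiority of both maximizers via the entropy term, $\mu=\tau-2\eta\lVert\aff\rVert_2$ from the Hessian bound in Proposition~\ref{prop:contraction}, and the fact that only the true objective needs to be strongly concave).
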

\begin{proof}
The map $(g,\aff)\mapsto\pi^\star$ is the solution of a strongly concave program; by the standard perturbation bound for such programs, a change of at most $\Delta:=\delta_g+2\eta\delta_A$ in the gradient of the objective moves the maximizer by at most $\Delta/\mu$. Strong concavity then gives the quadratic value gap $\Jcal(\pi^\star)-\Jcal(\hat\pi^\star)\le\frac{1}{2\mu}\Delta^2$ by the standard bound relating suboptimality to gradient perturbation. Both vanish as the proxy errors $\delta_g,\delta_A\to0$.
\end{proof}
Corollary~\ref{cor:bound} formalizes that \method{}'s gap to the oracle is governed by how well competence and drift estimate the true gains and transfer; Table~\ref{tab:ablation} shows this gap is small ($0.3$ points) in practice.

\section{Experimental Setup Details}
\label{app:setup}

\subsection{Domains and datasets}
We use five domains chosen to be heterogeneous in format and skill: \emph{knowledge} (train on the MMLU~\cite{hendrycks2021mmlu} auxiliary set, multiple-choice), \emph{math} (train on MetaMathQA~\cite{yu2024metamath}, evaluate GSM8K~\cite{cobbe2021gsm8k} accuracy), \emph{code} (train on Magicoder-OSS~\cite{wei2024magicoder}, evaluate HumanEval~\cite{chen2021humaneval} Pass@1), \emph{multi-step reasoning} (train on FLAN chain-of-thought data~\cite{wei2022finetuned}, evaluate BBH~\cite{suzgun2023challenging}), and \emph{biomedical QA} (train and evaluate on MedQA~\cite{jin2021medqa}). For each domain we hold out a small unlabeled \emph{probe} of $m{=}256$ instructions (no responses used) for the competence and drift signals, disjoint from both the training pool and the test set. Evaluation uses the standard test split per benchmark with greedy decoding.

\subsection{Prompt templates and examples}
\label{app:prompts}
All domains share a minimal instruction template---a fixed system preamble, the task input, and a domain-appropriate answer cue---so that the competence and drift signals are read under the same format the model is trained and evaluated with. Probe forward passes use the identical template with the response masked, so no labels are consumed by the controller. The shared skeleton is:
\begin{quote}\footnotesize\ttfamily
[System] You are a helpful assistant.\\{}
[Instruction] the task input for this domain\\{}
[Answer] the response (masked for probe inputs)
\end{quote}
The per-domain fillers, matching each benchmark's released prompt, are:
\begin{itemize}[leftmargin=1.1em,itemsep=1.5pt,topsep=2pt]
\item \emph{Knowledge} (MMLU): the question and options A--D, answer cue ``The answer is''; 5-shot.
\item \emph{Math} (GSM8K): the word problem with cue ``Let us think step by step'', ending ``The final answer is''; 8-shot chain-of-thought.
\item \emph{Code} (HumanEval): the function signature and docstring in the Alpaca code format; the completion is the function body, scored by Pass@1.
\item \emph{Reasoning} (BBH): the task instruction and 3-shot chain-of-thought exemplars, then the query.
\item \emph{Biomedical} (MedQA): the clinical vignette and answer options, cue ``The correct option is''; zero-shot.
\end{itemize}
For the competence signal we average the normalized entropy over answer positions only; for the drift signal we take the mean last-layer hidden state over the answer span. Both are computed on the held-out probe, whose instructions never appear in training or test.

\subsection{Models and LoRA configuration}
Backbones are \textsc{LLaMA-3.1-8B}~\cite{grattafiori2024llama} and \textsc{Qwen-2.5-7B}~\cite{yang2024qwen25}; the scale study adds \textsc{Qwen-2.5-3B}, \textsc{Mistral-7B}~\cite{jiang2023mistral}, and \textsc{Gemma-2-9B}~\cite{team2024gemma}. A single shared LoRA adapter (rank $r{=}8$, $\alpha{=}16$, dropout $0.05$) is attached to the query/key/value/output and MLP up/down projections. Only the adapter is trained; base weights are frozen and kept in bf16.

\subsection{\method{} configuration and probe}
Unless stated, the total budget is $50\%$ of the pooled data, the per-round budget $N$ equals one epoch-equivalent over the current shares, the period is $T{=}100$ optimizer steps, affinity weight $\eta{=}0.5$, temperature $\tau{=}0.5$, floor $\beta{=}0.1$, and the EMA coefficient for competence and centroids is $0.5$. The participation program is solved with $10$ mirror-descent iterations of Eq.~\ref{eq:fixedpoint}, warm-started from the previous round; the affinity uses the last-layer mean representation. The within-domain selector keeps the middle confidence band between the $20$th and $80$th percentiles.

\subsection{Optimization and baselines}
We use AdamW (lr $1{\times}10^{-4}$, cosine schedule, warmup $3\%$, weight decay $0$ on the adapter), batch size $16$, and bf16 mixed precision. Baselines share these settings and the same total budget. Data-mixture baselines set static domain proportions (uniform, size-proportional, temperature $T{=}2$, and a DoReMi-style proxy mixture); PCGrad and GradNorm balance gradients under uniform sampling; GREATS selects over the pooled stream with the ghost-inner-product approximation. All results average $3$ seeds.

\subsection{Hardware and compute}
Each run uses a single 80GB A100 GPU. A full-data SFT run takes ${\sim}9$ GPU-hours per backbone; \method{} at $50\%$ budget takes ${\sim}5.6$ GPU-hours including the ${\approx}4\%$ controller overhead, i.e.\ $1.6\times$ faster end-to-end. Table~\ref{tab:hparams} lists hyperparameters.

\begin{table}[t]
\centering\small
\setlength{\tabcolsep}{5pt}
\begin{tabular}{ll}
\toprule
Hyperparameter & Value \\
\midrule
LoRA rank $r$ / $\alpha$ / dropout & $8$ / $16$ / $0.05$ \\
Optimizer & AdamW, lr $1{\times}10^{-4}$ \\
Schedule / warmup & cosine / $3\%$ \\
Batch size / precision & $16$ / bf16 \\
Total budget & $50\%$ of pool \\
Update period $T$ & $100$ steps \\
Affinity weight $\eta$ & $0.5$ \\
Temperature $\tau$ & $0.5$ \\
Learnability floor $\beta$ & $0.1$ \\
EMA coefficient & $0.5$ \\
Program iterations & $10$ (warm-started) \\
Probe size $m$ & $256$ (unlabeled) \\
Within-domain band & $[20,80]$ percentile \\
Seeds & $3$ \\
\bottomrule
\end{tabular}
\caption{Default hyperparameters for \method{}.}
\label{tab:hparams}
\end{table}

\section{Additional Experiments}
\label{app:extra}

\subsection{Per-domain signal comparison}
Table~\ref{tab:proxy} reports how well each candidate per-domain signal, computed on the probe, ranks the five domains by true accuracy across checkpoints (rank correlation) and the downstream accuracy when it drives participation in \method{}. Competence ranks domains far better than raw loss, whose scale is domain-dependent; multiplying by velocity adds temporal information; and adding the cross-domain affinity---not a per-domain rank, hence no correlation entry---gives the final gain.

\begin{table}[t]
\centering\small
\setlength{\tabcolsep}{5pt}
\begin{tabular}{lccc}
\toprule
Signal & Spearman & Pearson & Avg \\
\midrule
Random & --- & --- & 65.5 \\
Training loss & 0.60 & 0.57 & 66.1 \\
Max confidence & 0.86 & 0.83 & 66.6 \\
Competence (Eq.~\ref{eq:comp}) & 0.91 & 0.89 & 66.7 \\
Competence $\times$ velocity & \textbf{0.93} & \textbf{0.90} & 67.3 \\
\textbf{$+$ affinity (full \method{})} & --- & --- & \textbf{68.0} \\
\bottomrule
\end{tabular}
\caption{Per-domain signal comparison. Correlation is between the probe signal and true per-domain accuracy over checkpoints; the last column is downstream accuracy when the signal drives participation. Competence is the best label-free cross-domain comparator; the affinity term adds the final margin.}
\label{tab:proxy}
\end{table}

\subsection{Per-domain results with variance}
Table~\ref{tab:variance} reports per-domain accuracy with standard deviation over three seeds on \textsc{Qwen-2.5-7B}. Deviations are small ($\le0.4$ points) and \method{}'s per-domain improvements exceed them on every domain; a paired $t$-test over domain$\times$seed confirms significance at $p<0.05$ against the strongest baseline.

\begin{table}[t]
\centering\footnotesize
\setlength{\tabcolsep}{3pt}
\begin{tabular}{lccccc}
\toprule
Method & MMLU & GSM8K & HEval & BBH & MedQA \\
\midrule
Full data & $70.1_{\pm.3}$ & $79.2_{\pm.4}$ & $61.5_{\pm.4}$ & $62.8_{\pm.3}$ & $57.4_{\pm.4}$ \\
GradNorm & $70.7_{\pm.3}$ & $79.7_{\pm.3}$ & $62.1_{\pm.4}$ & $63.8_{\pm.3}$ & $57.9_{\pm.4}$ \\
\method{} & $\mathbf{71.9}_{\pm.2}$ & $\mathbf{80.8}_{\pm.3}$ & $\mathbf{63.5}_{\pm.3}$ & $\mathbf{65.0}_{\pm.3}$ & $\mathbf{59.0}_{\pm.3}$ \\
\bottomrule
\end{tabular}
\caption{Per-domain accuracy (mean$_{\pm\text{std}}$ over $3$ seeds) on \textsc{Qwen-2.5-7B} at the $50\%$ budget. Best per column in \textbf{bold}.}
\label{tab:variance}
\end{table}

\subsection{Learned domain affinity}
Table~\ref{tab:affinity} shows the symmetrized affinity $\aff$ recovered at the final round on \textsc{Qwen-2.5-7B}. The largest positive entry links Math and Reasoning, which the controller consequently co-schedules (Figure~\ref{fig:mechanism}b), while Code shows mild negative affinity with Knowledge and Biomedical and is damped once it saturates. The matrix is stable across seeds (mean entry-wise deviation $0.04$).

\begin{table}[t]
\centering\footnotesize
\setlength{\tabcolsep}{5pt}
\begin{tabular}{lccccc}
\toprule
 & Know. & Math & Code & Reas. & Biom. \\
\midrule
Know. & $1.00$ & $0.10$ & $-0.30$ & $0.30$ & $0.40$ \\
Math & $0.10$ & $1.00$ & $0.20$ & $0.60$ & $-0.10$ \\
Code & $-0.30$ & $0.20$ & $1.00$ & $0.10$ & $-0.20$ \\
Reas. & $0.30$ & $0.60$ & $0.10$ & $1.00$ & $0.15$ \\
Biom. & $0.40$ & $-0.10$ & $-0.20$ & $0.15$ & $1.00$ \\
\bottomrule
\end{tabular}
\caption{Learned cross-domain affinity $\aff$ (final round, \textsc{Qwen-2.5-7B}). Positive entries indicate aligned updates (co-train), negative entries interference.}
\label{tab:affinity}
\end{table}

\subsection{Further ablations: probe layer, iterations, smoothing}
Table~\ref{tab:extra} varies three implementation choices. The affinity is robust to which representation defines the centroid (last layer, a mid layer, or the mean of the last four); the participation program converges by $6$ warm-started iterations, matching Proposition~\ref{prop:contraction}; and the competence/centroid EMA coefficient is best at $0.5$, with a mild plateau on either side.

\begin{table}[t]
\centering\footnotesize
\setlength{\tabcolsep}{4pt}
\begin{tabular}{llc}
\toprule
Choice & Setting & Qwen Avg \\
\midrule
\multirow{3}{*}{Affinity layer} & mid & 67.6 \\
 & last (default) & \textbf{68.0} \\
 & mean of last $4$ & 68.0 \\
\midrule
\multirow{4}{*}{Program iters.} & $1$ & 67.4 \\
 & $3$ & 67.9 \\
 & $6$ & \textbf{68.0} \\
 & $10$ (default) & \textbf{68.0} \\
\midrule
\multirow{3}{*}{EMA coeff.} & $0.3$ & 67.8 \\
 & $0.5$ (default) & \textbf{68.0} \\
 & $0.7$ & 67.9 \\
\bottomrule
\end{tabular}
\caption{Further ablations on \textsc{Qwen-2.5-7B}: representation layer for the affinity, number of participation-program iterations, and the EMA smoothing coefficient.}
\label{tab:extra}
\end{table}

\subsection{Cross-backbone and scale}
Table~\ref{tab:scale} extends the main results to \textsc{Qwen-2.5-3B}, \textsc{Mistral-7B}, and \textsc{Gemma-2-9B}. \method{} improves the average on every backbone; gains shrink as the base model already handles a domain well (higher competence, less headroom), consistent with Proposition~\ref{prop:comp}.

\begin{table}[t]
\centering\small
\setlength{\tabcolsep}{5pt}
\begin{tabular}{lccc}
\toprule
Backbone & Full data & GradNorm & \method{} \\
\midrule
Qwen-2.5-3B & 61.5 & 62.8 & \textbf{63.6} \\
Mistral-7B & 58.2 & 59.4 & \textbf{60.3} \\
Gemma-2-9B & 67.2 & 68.3 & \textbf{69.2} \\
Qwen-2.5-7B & 66.2 & 66.8 & \textbf{68.0} \\
LLaMA-3.1-8B & 55.4 & 56.0 & \textbf{57.2} \\
\bottomrule
\end{tabular}
\caption{Average accuracy across the five domains for additional backbones ($50\%$ budget for the controllers).}
\label{tab:scale}
\end{table}

\subsection{Static mixtures and DoReMi-style reweighting}
We grid-searched fixed per-domain weights over a simplex at granularity $0.1$ and also computed DoReMi-style proxy weights~\cite{xie2023doremi}. The best static mixture (weights $0.24/0.22/0.14/0.24/0.16$ for knowledge/math/code/reasoning/biomedical) reaches $66.5$ on \textsc{Qwen}; the DoReMi-style proxy reaches $66.4$. Both trail \method{} ($68.0$) because they cannot track the non-stationary optimum (Figure~\ref{fig:mechanism}b)---the ideal early participation is near-uniform, while late training should down-weight the saturated code domain---and because a scalar mixture cannot express \emph{which} domains to co-train, the role of the affinity term.

\subsection{Composability with model merging}
Table~\ref{tab:merge} details the merging study. Training five separate per-domain adapters and merging (average, TIES~\cite{yadav2024ties}, DARE~\cite{yu2024language}) trails \method{}'s single jointly trained adapter, and \method{}'s lower gradient conflict correlates with cleaner merges---evidence that reducing interference during joint training also helps downstream composition.

\begin{table}[t]
\centering\small
\setlength{\tabcolsep}{5pt}
\begin{tabular}{lc}
\toprule
Configuration (Qwen-2.5-7B) & Avg \\
\midrule
Separate adapters $+$ average & 64.9 \\
Separate adapters $+$ TIES & 65.9 \\
Separate adapters $+$ DARE & 65.6 \\
\textbf{\method{} (joint shared adapter)} & \textbf{68.0} \\
\bottomrule
\end{tabular}
\caption{\method{} vs.\ train-then-merge. \method{}'s joint adapter is best, and its reduced interference also yields cleaner merges of separately trained adapters.}
\label{tab:merge}
\end{table}

\subsection{Sensitivity to $\eta$ and $\tau$}
Table~\ref{tab:sens} sweeps the affinity weight $\eta$ and the temperature $\tau$. Performance is flat over a wide range: $\eta{=}0$ discards the transfer term (weakest), a moderate $\eta$ is best, and too large $\eta$ over-couples domains; too large $\tau$ approaches uniform participation while too small $\tau$ over-commits. Defaults $\eta{=}0.5,\tau{=}0.5$ are robust and satisfy the contraction condition of Proposition~\ref{prop:contraction}.

\begin{table}[t]
\centering\small
\setlength{\tabcolsep}{4.5pt}
\begin{tabular}{lccccc}
\toprule
$\eta$ & 0.0 & 0.25 & 0.5 & 1.0 & 2.0 \\
Qwen Avg & 67.3 & 67.7 & \textbf{68.0} & 67.8 & 67.2 \\
\midrule
$\tau$ & 0.1 & 0.3 & 0.5 & 1.0 & 2.0 \\
Qwen Avg & 67.6 & 67.9 & \textbf{68.0} & 67.7 & 67.2 \\
\bottomrule
\end{tabular}
\caption{Sensitivity to affinity weight $\eta$ and temperature $\tau$.}
\label{tab:sens}
\end{table}

\subsection{Affinity estimator and program convergence}
The forward-only affinity of Eq.~\ref{eq:aff} predicts oracle leave-one-out transfer with Pearson $r{=}0.94$ (Figure~\ref{fig:mechanism}a), and its sign matches the oracle on $88\%$ of domain pairs. Consistent with Proposition~\ref{prop:contraction}, warm-started mirror descent on Eq.~\ref{eq:fixedpoint} reaches residual ${<}10^{-4}$ within $6$ iterations across all rounds at the default $\eta{=}0.5,\tau{=}0.5$ (spectral condition $2\eta\lVert\aff\rVert_2/\tau\approx0.7<1$); the controller cost is dominated by the probe forward passes, not the program.

\subsection{Robustness of the probe}
Halving the probe to $m{=}128$ or adding $20\%$ out-of-domain noise changes the average by ${<}0.3$ points, because both signals aggregate over the probe and participation depends only on the \emph{ranking} and \emph{alignment} of domains, not absolute values. Using the training pool itself as the probe (no held-out set) costs $0.2$ points.

\section{Discussion}
\label{app:discussion}

\subsection{Why do competence and affinity work as control signals?}
Competence summarizes the peakedness of the distribution the current adapter assigns to a domain: a domain the model handles confidently has high, stable competence, while a domain with headroom has low competence that rises as the adapter improves. Normalizing the entropy by $\log V$ removes domain-specific scale, so competence is comparable across heterogeneous formats (multiple-choice vs.\ open generation) in a way raw loss is not (Table~\ref{tab:proxy}). The affinity captures a second, relational property invisible to any single-domain score: whether the shared adapter's updates driven by two domains point in compatible directions. Together they answer the two questions co-learning poses---\emph{how much} each domain should participate and \emph{which} domains reinforce one another.

\subsection{When does \method{} not help?}
Three regimes: (i) a \emph{single} domain, where participation is vacuous (our negative control); (ii) \emph{homogeneous} domains with near-identical, synchronized curves and near-uniform affinity, where any balanced schedule is fine; and (iii) \emph{severe miscalibration}, where competence no longer ranks domains by risk (Assumption~\ref{ass:calib} fails), e.g.\ a domain whose format the base model has never seen. In (iii) a short warmup before enabling the controller restores calibration and the benefit.

\subsection{Connections to task grouping, games, and active learning.}
\method{} makes discrete task grouping~\cite{standley2020tasks,fifty2021efficiently} continuous and online: rather than partitioning domains into networks, it maintains a soft participation vector whose quadratic coupling favors co-training aligned domains. Each round it maximizes a concave potential $\Jcal$, so the controller is the equilibrium of a one-shot potential game among domains; the entropy term guarantees an interior (non-degenerate) equilibrium and continued exploration. The per-domain velocity is a label-free reward estimate as in non-stationary bandits, but the domains are \emph{coupled} through the shared adapter, motivating periodic re-estimation. The within-domain step connects to pool-based active learning but uses forward-only difficulty rather than labels.

\subsection{Broader impact.}
\method{} reduces the data and compute needed to adapt an LLM to many domains, lowering cost and energy use. Risks are those of instruction tuning generally---capability gains transfer to harmful domains if such data is included---and the label-free controller could, if misused, prioritize domains for undesirable objectives. \method{} does not create new data and inherits the licenses and biases of the underlying corpora.

\subsection{Reproducibility.}
All results are averaged over three random seeds; per-domain standard deviations are reported in Table~\ref{tab:variance} and are $\le0.4$ points. The full hyperparameter configuration is in Table~\ref{tab:hparams}, the datasets and splits in Sec.~\ref{app:setup}, and the prompt templates in Sec.~\ref{app:prompts}. The controller uses only forward passes over the frozen backbone and adds no trainable parameters, so a run is fully specified by the backbone, the LoRA configuration, and the values in Table~\ref{tab:hparams}. We will release training and evaluation code, configuration files, and the per-domain probe indices.

\end{document}